\documentclass{article}
\usepackage{ijcai26}

\usepackage{times}
\usepackage{soul}
\usepackage{url}
\usepackage[hidelinks]{hyperref}
\usepackage[utf8]{inputenc}
\usepackage[small]{caption}
\usepackage{graphicx}
\usepackage{amsmath}
\usepackage{amsthm}
\usepackage{amssymb}
\usepackage{booktabs}
\usepackage{algorithm}
\usepackage{algorithmic}
\usepackage{multirow}
\usepackage[switch]{lineno}
\usepackage[title]{appendix}
\usepackage{xcolor}

\usepackage{subcaption}   
\usepackage{float}        

\newtheorem{theorem}{Theorem}
\newtheorem{lemma}{Lemma}

\newtheorem{proposition}{Proposition}
\newcommand{\myparam}{\lambda}

\title{Temporal Smoothness Doubly Robust Learning for Debiased Knowledge Tracing}

\author{
Peilin Zhan\thanks{These authors contributed equally to this work.}\and
Wei Chen\footnotemark[1]\and
Weilin Chen\footnotemark[1]\and
Shuyi Pan\And
Ruichu Cai\thanks{Corresponding author.}\\
\affiliations
School of Computer Science and Technology, Guangdong University of Technology, Guangzhou, China\\
\emails
zhanpl01@outlook.com,
\{chenweidelight,
chenweilin.chn,
pansy0174,
cairuichu\}@gmail.com
}
 
\begin{document}

\maketitle

\begin{abstract}

    Knowledge Tracing (KT) is fundamental to intelligent education systems, yet relies on educational logs that are selectively observed. The non-random nature of exercise recommendations and student choices inevitably induces severe selection bias. Most existing KT methods neglect this issue, training on observed logs using standard empirical risk, which yields biased mastery estimates and accumulates errors in subsequent recommendations. To address this, we introduce a doubly robust (DR) formulation for KT that integrates a propensity model with an error imputation model, theoretically guaranteeing unbiasedness if either model is accurate. Beyond unbiasedness, in the sequential setting of KT, we identify that the estimator's performance is compromised by variance-dependent stochastic deviations that accumulate over time, thereby causing training instability and limiting performance. To mitigate this, we derive a generalization bound that explicitly characterizes the impact of estimator variance and identifies temporal smoothness as a key factor in controlling it. Building on these theoretical insights, we propose the Temporal Smoothness Doubly Robust (TSDR) framework. TSDR jointly optimizes the KT predictor and the imputation model with a smoothness regularizer, effectively reducing variance while preserving the unbiasedness guarantee of DR. Experiments on multiple real-world benchmarks demonstrate that TSDR consistently enhances various state-of-the-art KT backbones, underscoring the vital role of principled bias correction in KT. 
   
\end{abstract}

\section{Introduction}  


In intelligent education, tracing what students know is of great importance for adaptive learning ~\cite{ITS,ITS_effective}.
Knowledge Tracing (KT) serves this role by modeling students' dynamic knowledge states based on their historical learning interactions, aiming to predict how well they will master future questions.
However, despite the rapid progress of deep learning-based KT models ~\cite{DKT,sakt,AKT,sparsekt,dr4kt}, their predictions often exhibit systematic deviation from students' actual knowledge.
This discrepancy raises a fundamental question: are current KT models truly tracing students' knowledge, or just fitting their observed behaviors?

\begin{figure}[t] 
    \centering
    \includegraphics[width=0.5\textwidth]{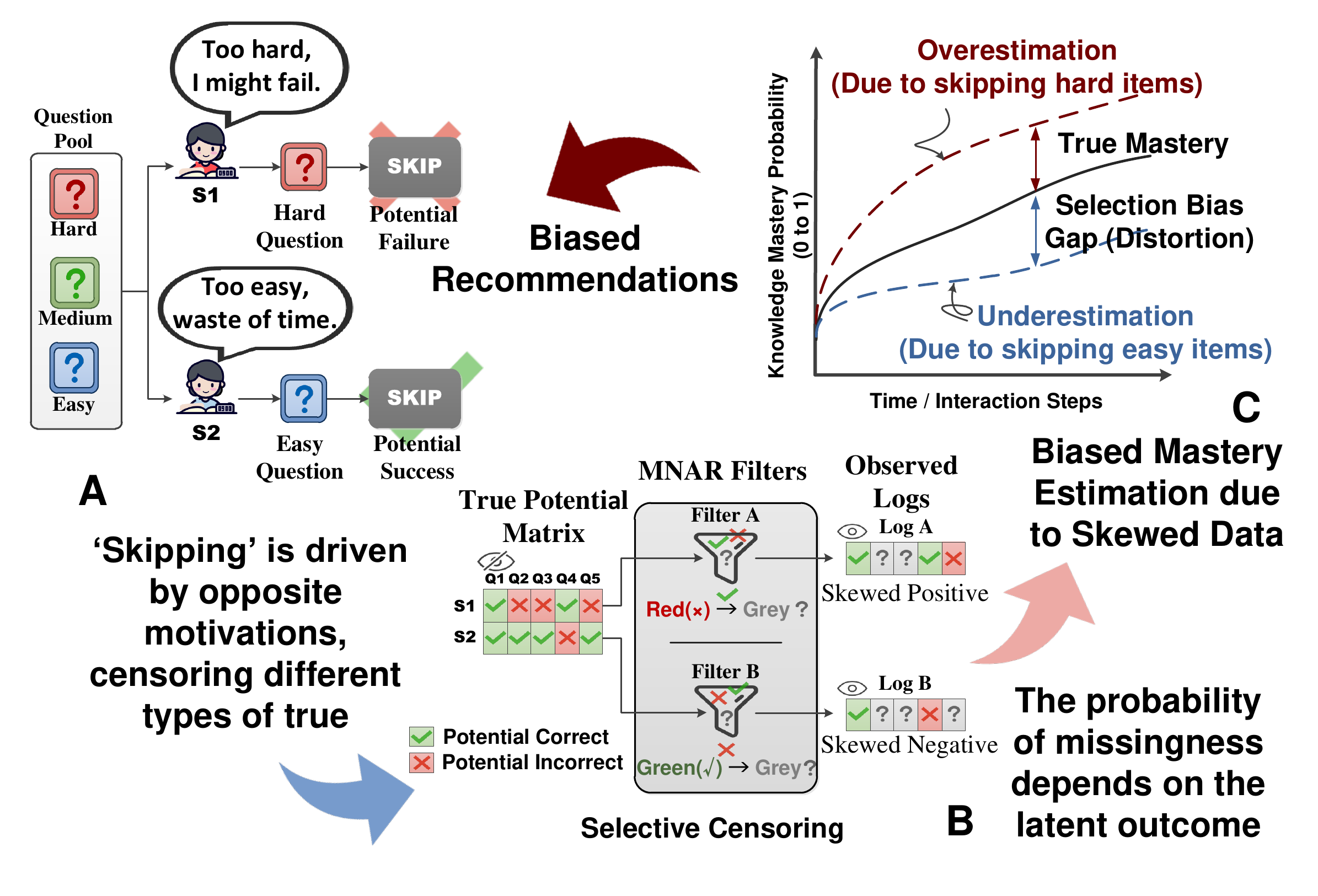}
    \caption{
        The causal mechanism of selection bias in KT: student skipping acts as a bias filter that systematically distorts the observed learning traces, causing models to misinterpret missingness and gradually internalize these distortions, ultimately learning biased knowledge states.
    }
    \label{fig:ill_bias}
\end{figure}

The reason for this discrepancy lies in the nature of the data on which current KT models are trained.
The recorded data exhibit bias since whether student-question interactions are observed may depend on the students' ability and the questions' difficulty, a phenomenon known as the Missing Not At Random (MNAR) property.
Taking Fig. \ref{fig:ill_bias} as an example, in online learning scenarios, students often strategically skip questions they deem ``too difficult" or ``boring" ~\cite{avoid1}. 
This behavior causes a lack of specific question types in the data.
Ignoring this missingness mechanism leads standard KT models to confound genuine knowledge states with interaction patterns, thereby introducing severe selection bias. 
This results in skewed proficiency estimates and diminished generalization capabilities. The issue is further compounded in adaptive systems, since recommendations are tailored to the student's estimated level via the Zone of Proximal Development (ZPD) ~\cite{ZPD}: biased estimates trigger biased recommendations, creating a feedback loop that exacerbates the bias~\cite{feedback_loop2,feedback_loop1}.

Addressing such a selection bias necessitates disentangling a student's actual performance from the observation probability.
This requires modeling two factors: which interactions are logged and what the responses would have been for unlogged interactions.
Accordingly, we adopt a Doubly Robust (DR) risk estimator that integrates selection correction via Inverse Propensity Weighting (IPW)~\cite{IPS1,IPS1_high_var} and potential performance estimation via Error Imputation (EIB)~\cite{EIB1}. 
This construction enjoys the double robustness property: the resulting risk estimate is unbiased as long as either the propensity model or the imputation model is accurate.
Nevertheless, directly applying DR in KT can be unstable: the training loss is prone to oscillation if imputation models are inaccurate~\cite{dr_j}. 
This issue is particularly critical in educational logs where responses are inherently noisy due to guessing and slipping~\cite{bkt_bias2008_1,aberrant2,hdkt_bias},
and such noise triggers error propagation and accumulation throughout deep sequential models, leading to significant error amplification.


To bridge this gap, we propose a Temporal Smoothness Doubly Robust (TSDR) learning framework for knowledge tracing. 
Rather than treating the instability as an inevitable side effect, we derive a generalization bound specifically for the KT context. 
This theoretical analysis reveals that the estimation risk is dominated by the instability of imputation errors. 
Guided by this insight, we design a joint learning paradigm where the KT predictor and the imputation model are co-optimized. 
Crucially, we introduce a temporal smoothness constraint to regularize the latent trajectory, which effectively tightens the generalization bound and suppresses variance, thereby preserving the unbiased nature of the DR estimator while ensuring training stability.
Overall, the main contributions of this paper are summarized as follows:

\begin{itemize}
\item We tackle the selection bias in KT induced by the non-random nature of exercise recommendations and student engagement choices. By adopting a causal inference perspective, we model the data generation process and propose a DR estimator to obtain unbiased mastery estimates.


\item We propose TSDR, a model-agnostic DR framework for bias correction that incorporates temporal smoothing to mitigate variance-related stochastic deviations. Supported by the proposed generalization bound, we employ a joint learning paradigm to co-optimize the KT and imputation models, ensuring robust performance.

\item 
We conduct extensive experiments on 9 real-world benchmark datasets. The empirical results validate the efficacy of our framework, showing that TSDR improves the performance of various state-of-the-art KT baselines. These findings highlight the practical value of addressing selection bias in knowledge tracing.
\end{itemize}


\section{Related Work}
\subsection{Classic Knowledge Tracing Models}
Early research in knowledge tracing is based on probabilistic graphical models, such as BKT~\cite{KT}. BKT leverages Hidden Markov Models to characterize knowledge state transitions, and Item Response Theory (IRT)~\cite{irt}, which focuses on modeling item difficulty and student ability. After that, the advent of deep learning catalyzed a profound paradigm shift. DKT ~\cite{DKT} pioneered the use of Recurrent Neural Networks to capture the hidden dynamics of knowledge acquisition within student interaction sequences, demonstrating strong adaptability in handling the complexities of real-world scenarios. Subsequently, the research paradigm gravitated towards Transformer-based architectures~\cite{transformer} for their superior ability to capture long-range dependencies. For instance, Self-Attentive Knowledge Tracing (SAKT)~\cite{sakt} first applied self-attention mechanisms to a student's historical interaction sequence to dynamically weigh the relevance of past exercises. AKT~\cite{AKT} further refined this by incorporating contextual information from the items themselves to mitigate interference from irrelevant exercises. More recent work has been dedicated to augmenting attention mechanisms to overcome their native inability to capture the dynamics of human forgetting or to robustly handle sequences of long lengths~\cite{stablekt_bias,folibikt_bias}. While their approaches differ, a significant body of work increasingly seeks to enhance model predictive performance from the perspective of learning high-quality representations for knowledge concepts or exercises. For instance, some studies have introduced Graph Neural Networks to model relationships among knowledge concepts~\cite{gkt}, while others have considered relations between knowledge concepts and exercises~\cite{mao1}, among other complex relationships.

However, these methods largely overlook systematic selection bias from non-random assignments, limiting their ability to generalize to counterfactual scenarios.
\subsection{Bias and Debiasing in Knowledge Tracing}
A central challenge in the development of robust Knowledge Tracing (KT) models is the mitigation of various biases that can compromise their validity. Existing research has predominantly focused on two key areas: first, addressing the inherent noise present within student interaction data; and second, preventing the model from learning spurious correlations during the training process.
\paragraph{Debiasing from Noise in Student Responses.}
The first major line of research targets the inherent noise in student response data, which can distort the model's inference of a student's true knowledge state. This noise often stems from aberrant student behaviors. Early models explicitly incorporated parameters for guessing and slipping to account for such behaviors~\cite{bkt_bias2024_2}. 
More recent work has explored nuanced modeling of factors like student disengagement~\cite{aberrant1}
and added small perturbations to input embeddings~\cite{adversarial_aberrant3} to improve robustness. State-of-the-art approaches now employ adversarial training and contrastive learning to generate debiased knowledge representations that are resilient to abnormal responses, thereby isolating the most reliable signals for prediction~\cite{cr_debiased,hdkt_bias}, also acknowledging that student actions do not always directly reflect their knowledge~\cite{cr_debiased}. While mitigating response noise, they focus on denoising rather than data recovery, failing to account for selection bias from non-random missingness.


\paragraph{Addressing Spurious Correlations and Causal Confusion.}
The second major area of research addresses the risk of the model learning spurious correlations from confounding factors, which is often termed ``causal confusion" \cite{causal_confusion}. This occurs when a model learns shortcuts from data artifacts instead of isolating the student's true knowledge. These confounders can stem from various sources, including the model's own architectural limitations. To counteract confounding more comprehensively, researchers have utilized a spectrum of techniques, many of which are predicated on the principles of causal inference. For instance, to address confounding from the imbalanced informativeness of historical responses, DR4KT introduces a reweighting framework~\cite{dr4kt}. Other approaches use formal causal modeling to disentangle effects. The CORE framework mitigates answer bias by modeling and subtracting the direct causal effect of the question on the response~\cite{core}. Similarly, DisKT uses a disentangled modeling approach and a causal subtraction mechanism to remove confounding from uneven performance distributions across concepts~\cite{diskt_bias}. For more complex scenarios involving unobserved confounders, recent work has leveraged the front-door criterion from causal inference, implementing it through causal self-attention mechanisms to identify the true, unconfounded pathway from knowledge to performance~\cite{front_door_kt}. While recent causal-aware approaches attempt to mitigate specific confounders, they primarily focus on feature-level disentanglement rather than correcting the systemic selection bias in the data generation process. 

\subsection{MNAR and DR Debiasing}
While existing KT approaches address biases regarding \textit{what} is recorded and \textit{how} it is modeled, they largely overlook a fundamental bias rooted in \textit{why} interactions are recorded: selection bias. This challenge is closely related to the MNAR problem in Recommender Systems (RS), where doubly robust learning mitigates selection bias by combining propensity weighting with error imputation~\cite{dr1,dr_j}. Beyond RS, DR has also been explored in structured observational settings such as networked interference~\cite{tnet}. Similar concerns arise in time-series imputation, where causal perspectives emphasize missing mechanisms in temporal data~\cite{cai2025causalview}. However, KT differs from these settings by modeling evolving knowledge states rather than static preferences, networked effects, or imputed values. Directly applying existing DR estimators to KT may therefore cause high-variance corrections and error accumulation along the learning sequence. We thus adapt DR to KT and introduce temporal smoothness to stabilize sequential training.

\section{Problem Formulation}
The objective of KT is to predict a student's performance on an upcoming question by modeling their knowledge state from past interactions. Without loss of generality, we define a student's historical interaction sequence as $Seq_t := \{ (c_1, r_1), (c_2, r_2), \dots, (c_t, r_t) \}$, where $c_t$ and $r_t$ denote 
the concept from the concept set $\mathcal{C}$ and the binary response correctness ($r_t \in \{0, 1\}$) at time step $t$ respectively. Given the next 
concept $c_{t+1}$, the KT model aims to first estimate the student's latent knowledge state $\hat{h}_t = f_\theta(Seq_t)$, and then use this state to predict the probability of a correct answer, $\hat{r}_{t+1} = g_\phi(\hat{h}_t,  c_{t+1})$. The model parameters $\theta$ and $\phi$ are typically optimized by minimizing the prediction error, e.g., binary cross-entropy (BCE), on the observed data.

Since interactions are selectively acquired (MNAR) rather than random based on recommendation policies or student preferences, minimizing error on observed data results in a biased estimator. We typically analyze this bias and derive a causal-aware solution to mitigate it, given the students' historical interaction sequences data.

\section{Theoretical Analysis}
\label{sec:theoretical_estimation}
To address the selection bias identified in the problem formulation, we reframe the KT task from the perspective of counterfactual estimation ~\cite{pearl2016causal} for model performance. We formalize the discrepancy between the ideal true risk and the naive empirical risk, and then derive the DR estimator as a principled solution.

\subsection{Revisiting Estimation Problem}
Let data $\mathcal{D}_\text{full} = \{(t, c) \mid 1 \le t \le T, c \in \mathcal{C}\}$ with $T$ being the maximum length of the interaction sequence denote the ideal full set of potential interactions for a student sequence. We define the true risk $\mathcal{R}_{true}$ as the average prediction error over this full distribution of data:
\begin{equation}
    \mathcal{R}_{true} = \mathbb{E}_{(t,c) \sim \mathcal{D}_\text{full}} [e_{t+1,c}] = \frac{1}{N} \sum_{t=1}^{T-1} \sum_{c=1}^{|\mathcal{C}|} e_{t+1,c},
\end{equation}
where $N = (T-1) \times |\mathcal{C}|$ represents the total number of prediction targets, and $e_{t+1,c} = - \big( r_{t+1,c} \log(\hat{r}_{t+1,c}) + (1 - r_{t+1,c}) \log(1 - \hat{r}_{t+1,c}) \big)$ is the BCE loss for skill $c$ at time $t$.

In practice, we only observe a subset of interactions where the observation indicator $o_{t+1,c}=1$. The \textbf{Naive Estimator}, which corresponds to the standard KT objective, calculates the risk solely on observed data:
\begin{equation}
    \hat{\mathcal{R}}_{naive} = \frac{\sum o_{t+1,c} \cdot e_{t+1,c}}{\sum o_{t+1,c}}.
\end{equation}
Since the observation probability, i.e., propensity $p_{t+1,c} = P(o_{t+1,c}=1|h_{t},c)$, is non-uniform due to the MNAR mechanism, $\mathbb{E}[\hat{\mathcal{R}}_{naive}] \neq \mathcal{R}_{true}$. This estimation bias misguides the optimization, causing the model to overfit to the specific selection policy rather than learning the true knowledge dynamics.




\subsection{Constructing DR Estimator}
To obtain an unbiased estimate of $\mathcal{R}_\text{true}$, we combine two distinct approaches: the imputation-based method modeling $\hat{e}_{t+1,c}$ and the propensity-based method modeling $\hat{p}_{t+1,c}$.

We introduce the DR estimator, denoted as $\hat{\mathcal{R}}_\text{DR}$. Instead of viewing it merely as a loss function, we define it as a statistical estimator for the unobserved true risk:
\begin{equation} \label{eq:dr}
    \hat{\mathcal{R}}_\text{DR} = \frac{1}{N} \sum_{t=1}^{T-1} \sum_{c=1}^{|\mathcal{C}|} \bigg[ \underbrace{\hat{e}_{t+1,c}}_{\text{Imputation}} + \underbrace{\frac{o_{t+1,c}}{\hat{p}_{t+1,c}} (e_{t+1,c} - \hat{e}_{t+1,c})}_{\text{Correction Term}} \bigg],
\end{equation}
where $N$ represents the total number of entries, $T$ is the maximum sequence length, and $\mathcal{C}$ denotes the set of knowledge concepts. The intuition behind this formulation is to leverage the imputed error $\hat{e}_{t+1,c}$ as a \textbf{baseline estimate} for the risk on the full distribution. 
Since the imputation model alone may be biased, the second term, $\frac{o_{t+1,c}}{\hat{p}_{t+1,c}} (e_{t+1,c} - \hat{e}_{t+1,c})$, acts as a \textbf{bias correction}. It adjusts the baseline using the residual between the true error $e_{t+1,c}$ and the imputed error $\hat{e}_{t+1,c}$ for observed interactions, weighted by the inverse propensity. This structure explicitly demonstrates the ``double robustness'' property: if the imputation is accurate (i.e., $\hat{e} \approx e$), the correction term vanishes, effectively minimizing variance; conversely, if the propensity model is accurate, the correction term ensures the estimator remains unbiased even if the baseline imputation is poor.

\subsection{Theoretical Guarantee from Bias Analysis}
To show why this estimator is preferred, we analyze its bias with respect to the true risk. The bias is defined as the difference between the expected value of the estimator over the observation distribution $O$ and the true target. The complete theoretical framework and main results are presented in Appendix \ref{sec:theoretical_main}, with detailed proofs provided in Appendix \ref{sec:detailed_proofs}. 

For clearly, we define the imputation error as $\delta_{t,c} := e_{t,c} - \hat{e}_{t,c}$ and the propensity error as $\Delta_{t,c} := 1- \frac{p_{t,c}}{\hat{p}_{t,c}} $.

\begin{lemma}[Bias of the DR Estimator] \label{lmm: bias}
The bias of the DR estimator is bounded by the product of the errors of the two auxiliary models:
\begin{equation}
    \left| \mathbb{E}_O [\hat{\mathcal{R}}_\text{DR}] - \mathcal{R}_\text{true} \right| \leq \frac{1}{N} \sum_{(t,c)\in\mathcal{D}_\text{full}} \left|  \Delta_{t+1,c} \cdot \delta_{t+1,c}\right|.
\end{equation}
\end{lemma}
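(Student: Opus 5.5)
The plan is a direct computation of the expectation of $\hat{\mathcal{R}}_\text{DR}$ over the observation indicators $O$, followed by the triangle inequality. The only randomness averaged over is $o_{t+1,c}$. The quantities $e_{t+1,c}$, $\hat{e}_{t+1,c}$ and $\hat{p}_{t+1,c}$ are held fixed, so we condition on the history $h_t$ and the prediction, imputation and propensity models. By the definition of the propensity, $\mathbb{E}_O[o_{t+1,c}] = P(o_{t+1,c}=1\mid h_t,c) = p_{t+1,c}$. Since $\hat{\mathcal{R}}_\text{DR}$ in \eqref{eq:dr} is linear in the $o_{t+1,c}$, linearity of expectation gives
\begin{equation*}
\mathbb{E}_O[\hat{\mathcal{R}}_\text{DR}] = \frac{1}{N}\sum_{(t,c)\in\mathcal{D}_\text{full}} \Big[\hat{e}_{t+1,c} + \frac{p_{t+1,c}}{\hat{p}_{t+1,c}}\big(e_{t+1,c}-\hat{e}_{t+1,c}\big)\Big].
\end{equation*}

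Next, subtract $\mathcal{R}_\text{true} = \frac{1}{N}\sum e_{t+1,c}$ term by term. For each entry, the summand minus $e_{t+1,c}$ equals
\begin{equation*}
\frac{p_{t+1,c}}{\hat{p}_{t+1,c}}\,\delta_{t+1,c} - \delta_{t+1,c} = -\Big(1-\frac{p_{t+1,c}}{\hat{p}_{t+1,c}}\Big)\delta_{t+1,c} = -\Delta_{t+1,c}\,\delta_{t+1,c},
\end{equation*}
using $\delta_{t+1,c}=e_{t+1,c}-\hat{e}_{t+1,c}$. This yields the exact identity $\mathbb{E}_O[\hat{\mathcal{R}}_\text{DR}]-\mathcal{R}_\text{true} = -\frac{1}{N}\sum \Delta_{t+1,c}\delta_{t+1,c}$. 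Taking absolute values and applying the triangle inequality gives the stated bound. It also exhibits double robustness: the bias vanishes whenever, for every entry, either $\Delta_{t+1,c}=0$ or $\delta_{t+1,c}=0$.

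The main obstacle is justifying that $e_{t+1,c}$, $\hat{e}_{t+1,c}$ and $\hat{p}_{t+1,c}$ may be treated as constants with respect to $O$. In KT, the history $h_t$, and hence the model predictions, depend on earlier observation indicators. I would handle this by reading $\mathbb{E}_O$ as a conditional expectation given the history (the propensity is itself defined conditionally on $h_t$). We then apply the identity per entry conditionally, which requires $\hat{p}_{t+1,c}>0$, and take outer expectations if an unconditional statement is desired. The bound survives the outer expectation by Jensen's inequality applied to $|\cdot|$.
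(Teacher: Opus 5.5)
Your proposal is correct and follows the paper's proof essentially step for step: you compute $\mathbb{E}_O[\hat{\mathcal{R}}_\text{DR}]$ using $\mathbb{E}_O[o_{t+1,c}]=p_{t+1,c}$, write the gap to $\mathcal{R}_\text{true}$ as $-\frac{1}{N}\sum \Delta_{t+1,c}\delta_{t+1,c}$, and finish with the triangle inequality. Your remark about conditioning on the history and requiring $\hat{p}_{t+1,c}>0$ makes explicit an assumption the paper leaves implicit, namely that $e$, $\hat{e}$ and $\hat{p}$ are treated as fixed with respect to $O$; note that an unconditional version of the bound would carry an expectation on the right-hand side.
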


\begin{proof}
\textit{(Sketch)} By linearity of expectation, $\mathbb{E}_O [o_{t,c}] = p_{t,c}$. Substituting this into the expectation of $\hat{\mathcal{R}}_\text{DR}$:
\begin{equation}
\begin{aligned}
    \mathbb{E}_O [\hat{\mathcal{R}}_\text{DR}] &= \frac{1}{N} \sum_{t,c} \left( \hat{e}_{t+1,c} + \frac{p_{t+1,c}}{\hat{p}_{t+1,c}}(e_{t+1,c} - \hat{e}_{t+1,c}) \right) \\
    &= \frac{1}{N} \sum_{t,c} \left( e_{t+1,c} \frac{p_{t+1,c}}{\hat{p}_{t+1,c}} + \hat{e}_{t+1,c} (1 - \frac{p_{t+1,c}}{\hat{p}_{t+1,c}}) \right).
\end{aligned}
\end{equation}
Subtracting $\mathcal{R}_{true} = \frac{1}{N} \sum e_{t+1,c}$ from this expectation yields the bias term proportional to $\delta_{t+1,c} \cdot \Delta_{t+1,c}$.
\end{proof}

Lemma \ref{lmm: bias} confirms the \textbf{Double Robustness} property: the estimation is unbiased if \textbf{either} the propensity model or the imputation model is accurate. 
More attractively, the bias depends on a \textbf{product} of the two model errors.
When both models are learned with sufficiently expressive estimators (e.g., neural networks) and their estimation errors can be controlled, the DR estimator achieves a \textbf{second-order error decay}, resulting in an error that is smaller than that of estimators typically dominated by a single estimation error.

\subsection{Generalization Bound and Variance Control}
While the DR estimator theoretically guarantees unbiasedness under mild conditions, its practical application on finite interaction sequences is often hindered by stochastic fluctuations. To rigorously quantify this risk, we derive a generalization bound that accounts for both the bias and the concentration properties of the estimator.

\begin{theorem}[Generalization Bound]\label{thm:generalization_bound}
For any finite hypothesis space $\mathcal{H}$, with probability at least $1-\eta$, the true risk of the optimal prediction is bounded by:
\begin{equation}
\begin{aligned}
    \mathcal{R}_{true} \leq \underbrace{\hat{\mathcal{R}}_\text{DR}}_{\text{Empirical Risk}}  &+ \underbrace{\frac{1}{N}\sum_{(t,c)\in\mathcal{D}_\text{full}} |\Delta_{t+1,c}\delta_{t+1,c}|}_{\text{Bias Term}} \\
    &+ \underbrace{\sqrt{\frac{\ln(2|\mathcal{H}|/\eta)}{2N^2} \sum_{(t,c)\in\mathcal{D}_\text{full}} \left( \frac{\delta_{t+1,c}}{\hat{p}_{t+1,c}} \right)^2}}_{\text{Variance Term}}.
\end{aligned}
\end{equation}
\end{theorem}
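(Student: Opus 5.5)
The plan is the standard decomposition used for DR estimators in recommender systems: split the gap into a bias part and a concentration part, then take a union bound over $\mathcal{H}$. For a fixed hypothesis $h\in\mathcal{H}$ (with its prediction errors $e_{t+1,c}$, imputed errors $\hat{e}_{t+1,c}$ and propensities $\hat{p}_{t+1,c}$ regarded as fixed), write
\begin{equation*}
\mathcal{R}_{true} - \hat{\mathcal{R}}_\text{DR} = \big(\mathcal{R}_{true} - \mathbb{E}_O[\hat{\mathcal{R}}_\text{DR}]\big) + \big(\mathbb{E}_O[\hat{\mathcal{R}}_\text{DR}] - \hat{\mathcal{R}}_\text{DR}\big).
\end{equation*}
Lemma \ref{lmm: bias} bounds the first bracket in absolute value by $\frac{1}{N}\sum |\Delta_{t+1,c}\delta_{t+1,c}|$, which is exactly the Bias Term. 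Everything then reduces to a high-probability bound on the second bracket.

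For the concentration step, view $\hat{\mathcal{R}}_\text{DR}$ as a function of the observation indicators $\{o_{t+1,c}\}$. Conditionally on the histories (the propensities $p_{t+1,c}=P(o_{t+1,c}=1\mid h_t,c)$), I treat these indicators as independent Bernoulli variables. This is the modeling assumption I would state explicitly. Each summand $X_{t+1,c} = \hat{e}_{t+1,c} + \frac{o_{t+1,c}}{\hat{p}_{t+1,c}}\delta_{t+1,c}$ then takes only the two values $\hat{e}_{t+1,c}$ and $\hat{e}_{t+1,c}+\delta_{t+1,c}/\hat{p}_{t+1,c}$, so it lies in an interval of length $|\delta_{t+1,c}|/\hat{p}_{t+1,c}$. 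Hoeffding's inequality applied to $\frac{1}{N}\sum X_{t+1,c}$ gives
\begin{equation*}
P\Big(\big|\hat{\mathcal{R}}_\text{DR}-\mathbb{E}_O[\hat{\mathcal{R}}_\text{DR}]\big| \ge \epsilon\Big) \le 2\exp\Big(-\frac{2N^2\epsilon^2}{\sum (\delta_{t+1,c}/\hat{p}_{t+1,c})^2}\Big).
\end{equation*}

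Next, take a union bound over the finite class $\mathcal{H}$. Setting the right-hand side times $|\mathcal{H}|$ equal to $\eta$ and solving for $\epsilon$ yields precisely $\sqrt{\frac{\ln(2|\mathcal{H}|/\eta)}{2N^2}\sum(\delta_{t+1,c}/\hat{p}_{t+1,c})^2}$. This holds simultaneously for all $h$, and in particular for the optimal (data-dependent) prediction. Adding the bias bound gives the claimed inequality.

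The main obstacle is the uniformity of the variance term. The range $|\delta_{t+1,c}|/\hat{p}_{t+1,c}$ depends on $h$, whereas the union bound needs a single $\epsilon$. To handle this, I would either pick a per-hypothesis $\epsilon_h$ with failure probability $\eta/|\mathcal{H}|$ each, which is valid because each deviation event is controlled separately, or bound by the worst case over $\mathcal{H}$. The per-hypothesis choice reproduces the stated form evaluated at the selected hypothesis.

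A second subtlety is that, in the sequential KT setting, the $o_{t+1,c}$ are not truly independent, since later histories depend on earlier observations. If conditional independence given histories is not acceptable as an assumption, I would replace Hoeffding with Azuma's inequality for the martingale differences $(o_{t+1,c}-p_{t+1,c})\delta_{t+1,c}/\hat{p}_{t+1,c}$. This requires $\delta$ and $\hat{p}$ to be predictable (measurable with respect to the past), and it gives the same constant.
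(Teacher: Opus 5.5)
Your proposal follows the paper's proof essentially step for step: the same split of $\mathcal{R}_{true}-\hat{\mathcal{R}}_\text{DR}$ into a bias gap and a deviation term, with the bias gap controlled by Lemma~\ref{lmm: bias} and the deviation $\mathbb{E}_O[\hat{\mathcal{R}}_\text{DR}]-\hat{\mathcal{R}}_\text{DR}$ controlled by a Hoeffding-type bound with per-term range $|\delta_{t+1,c}|/\hat{p}_{t+1,c}$, then a union bound over $\mathcal{H}$ with failure probability $\eta/|\mathcal{H}|$ per hypothesis. The paper applies Azuma--Hoeffding to the martingale difference sequence directly (your fallback) instead of assuming conditional independence, and both versions tacitly treat the ranges $|\delta_{t+1,c}|/\hat{p}_{t+1,c}$ as fixed; if they are only predictable, the stated constant requires them to be almost surely bounded by deterministic quantities, so your claim that Azuma gives the same constant should carry that proviso.
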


\begin{proof}
(Sketch) The bound is derived by decomposing the risk into bias and variance components. The bias term follows from Lemma \ref{lmm: bias}. The variance term is bounded using the Azuma-Hoeffding inequality~\cite{azuma1967weighted}, accounting for the sequential nature of student interactions, combined with a union bound over $\mathcal{H}$.  
\end{proof}
Theorem \ref{thm:generalization_bound} reveals a critical insight: the total generalization error is dominated not just by the bias product but also by the Variance Term, which scales with the squared imputation error $\delta_{t,c}^2$. 
Inaccuracy in the imputation model results in large error deviations $\delta_{u,i}$, which loosens the generalization bound and in turn leads to unstable training.
To minimize $\delta_{t,c}$ and tighten the generalization bound, we introduce a temporal smoothness constraint (Section~\ref{eq:ts}).
This constraint works by regularizing the latent trajectory.
Furthermore, we adopt a joint learning approach to couple the optimization processes (Section~\ref{sec:joint_learning}).



\section{Method}
\label{sec:Method}
We re-conceptualize the challenge of debiasing a temporal learning process, treating the knowledge state at each distinct time step as a temporal “user” and operating under the principle that each momentary state possesses its own distinct learning preferences and biases. Adopting this perspective allows us to apply a DR strategy to learn from these state-specific interactions in a robust manner.

\begin{figure*}[ht]          
  \centering
  \includegraphics[width=0.86\textwidth]{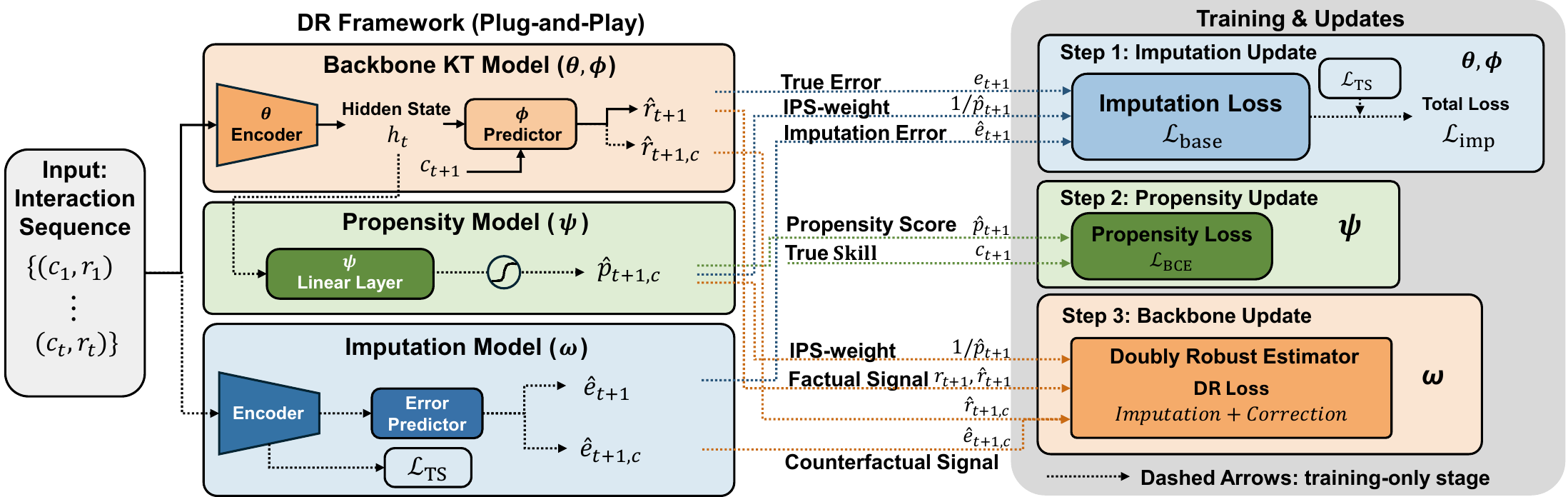}
   \caption{
        A Model-Agnostic DR Framework Overview.
    }
  \label{fig:Architecture}
\end{figure*}

\subsection{Model Architecture}
We propose a \textbf{Temporal-Smooth Doubly Robust (TSDR)} learning framework that jointly learns a propensity model, an error imputation model, and a KT model. The overall architecture of our framework is presented in Figure~\ref{fig:Architecture}.

\subsubsection{Propensity Model for Interaction Selection}
We learn the propensity score to model the interaction probability as: 
\begin{equation}
    \hat{p}_{t+1,c} = P(o_{t+1,c}=1 \mid \hat{h}_t, c).
\end{equation}
 
We model this as a \textbf{Multivariate Bernoulli distribution} for all concepts using a sigmoid output layer. The model is trained via BCE loss, treating unobserved concepts as implicit negatives.

\subsubsection{Error Imputation Model for Counterfactual Prediction}
The second component is the error imputation model, parameterized by $\omega$. Unlike the prediction model that focuses on knowledge mastery, this module aims to capture the dynamics of prediction residuals.
To decouple these distinct objectives, we employ a separate encoder for the imputation task. Let $h_t^{imp}$ denote the imputation-specific latent state derived from the interaction sequence $Seq_t$:
\begin{equation}
    h_t^{imp} = \text{Enc}_{\omega}(Seq_t) ,
\end{equation}
where $\text{Enc}_{\omega}$ denotes the internal knowledge state extractor of the imputation model, similar to the KT backbone.
The imputed error $\hat{e}_{t+1}$ is then predicted based on this state and a potential next concept:
\begin{equation}
    \hat{e}_{t+1,c} = g_{\omega}(h_t^{imp}, c).
\end{equation}

\paragraph{Optimization Objective.}The training of the imputation model consists of two components. First, the model aims to regress the true prediction errors $e_{t+1}$ based on the observed data. To address the distribution shift, we adopt an Inverse Propensity Score (IPS) weighted Mean Squared Error (MSE) as the base objective:
\begin{equation}
    \mathcal{L}_{\text{base}}(\omega) = \frac{1}{T-1}\sum_{t=1}^{T-1} \frac{(\hat{e}_{t+1} - e_{t+1})^2}{\hat{p}_{t+1}} ,
\end{equation}
where we focus solely on observation loss. 
Then, guided by the theoretical analysis in Theorem \ref{thm:generalization_bound}, we incorporate a \textbf{Temporal Smoothness} constraint on the latent states $h_t^{imp}$. 
This aligns with the Power Law of Practice~\cite{learning_curve}, which characterizes skill acquisition as a smooth, continuous power-function trajectory rather than a series of abrupt changes. 
By explicitly restricting the path length of $H^{imp}$, this constraint reduces the Rademacher complexity and tightens the generalization bound. 
The total loss function is formulated as:
\label{eq:ts}
\begin{equation}
    \mathcal{L}_{\text{imp}}(\omega) = \mathcal{L}_{\text{base}}(\omega) + \frac{\lambda}{T-1}\sum_{t=1}^{T-1} (h_t^{\text{imp}} - h_{t-1}^{\text{imp}})^2 ,
\end{equation}
where $\lambda$ is the coefficient controlling the strength of the smoothness regularization.

\subsubsection{KT Model with Doubly Robust Estimator}
As defined in our problem formulation, the core of our approach is the KT model, which consists of an encoder $f_\theta$ and a predictor $g_\phi$. The encoder $f_\theta(Seq_t)$ processes the interaction history to produce the knowledge state $\hat{h}_t$. The predictor $g_\phi(\hat{h}_t, c_{t+1})$ then outputs the probability of a correct answer $\hat{r}_{t+1,c}$. Our ultimate goal is to learn the parameters $\theta$ and $\phi$ in an unbiased manner. The vanilla prediction loss for a single step is given by the BCE:
\begin{equation}
    e_{t+1} = -  r_{t+1} \log(\hat{r}_{t+1}) - (1 - r_{t+1}) \log(1 - \hat{r}_{t+1}) .
\end{equation}

\paragraph{Doubly Robust Estimator.}
Adhering to the principle of double robustness, we train the KT model by minimizing the doubly robust loss instead of the naive BCE loss. As restated in Eq.~(\ref{eq:dr}), the loss is given by:
\begin{equation}
    \mathcal{L}_\text{DR}(\theta, \phi) = \frac{1}{N} \sum_{t=1}^{T-1}\sum_{c\in \mathcal{C}} \left( \hat{e}_{t+1,c} + \frac{o_{t+1,c}}{\hat{p}_{t+1,c}} (e_{t+1,c} - \hat{e}_{t+1,c}) \right) .
\end{equation}


\begin{algorithm}[ht]
    \caption{Joint Learning with TSDR}
    \label{alg:joint_learning}
    { 
    \raggedright
    \textbf{Input}: Observed interaction data $\mathcal{D}_{obs}$, Concept set $\mathcal{C}$.\\
    \textbf{Parameters}: Learning rates $\alpha$.\\
    \textbf{Output}: Trained KT model parameters $\theta, \phi$ \\ 
    } 
    \begin{algorithmic}[1] 
        \STATE Initialize parameters $\theta, \phi, \psi, \omega$.
        \WHILE{stopping criteria not met}
            \STATE Sample a mini-batch $\mathcal{B}$ from $\mathcal{D}_{obs}$.
            
            \STATE /* Phase 1: Update Auxiliary Models */
            \STATE Update imputation model: $\omega \leftarrow \omega - \alpha \nabla_\omega \mathcal{L}_{imp}(\omega)$
            \STATE Update propensity model: $\psi \leftarrow \psi - \alpha \nabla_\psi \mathcal{L}_{prop}(\psi)$

            \STATE /* Phase 2: Update the KT Model */
            \STATE Based on observed history in $\mathcal{B}$, compute counterfactual errors $\hat{e}_{t+1,c}$ for all concepts $c \in \mathcal{C}$.
            \STATE Calculate $\mathcal{L}_\text{DR}(\theta, \phi)$ by summing errors over $\mathcal{C}$.
            \STATE $\theta, \phi \leftarrow (\theta, \phi) - \alpha \nabla_{\theta,\phi} \mathcal{L}_\text{DR}(\theta, \phi)$
        \ENDWHILE
        \STATE \textbf{return} Optimized KT parameters $\theta, \phi$.
    \end{algorithmic}
\end{algorithm}
\subsection{Joint Learning Approach} 
\label{sec:joint_learning}
To mitigate the impact of imputation errors on prediction, we derive a generalization bound (Appendix~\ref{sec:detailed_proofs}) following~\cite{dr_j}. Motivated by this theoretical insight, we co-optimize the parameters ($\theta, \phi, \psi, \omega$) via an alternating strategy, as detailed in Algorithm~\ref{alg:joint_learning}.
This joint procedure allows the models to mutually regularize each other, leading to a more robust estimation of the true prediction inaccuracy and ultimately, a more effective and unbiased KT model.

\section{Experiment}
In this section, 
our evaluation aims to answer the following research questions:
\begin{itemize}
    \item \textbf{RQ1:} Does TSDR consistently enhance the performance of SOTA baselines on real-world datasets?
    \item \textbf{RQ2:} Can TSDR mitigate performance degradation under varying MNAR levels in controlled simulations?
    \item \textbf{RQ3:} How sensitive is the model to the intensity of the temporal smoothness constraint?
\end{itemize}

\subsection{Experimental Setup}

\paragraph{Benchmark Dataset.}We evaluate the proposed framework on 9 publicly available benchmark datasets covering diverse domains: Spanish\footnote{\url{https://github.com/robert-lindsey/WCRP}}~\cite{spanish}, ASSISTments17\footnote{\url{https://sites.google.com/view/assistmentsdatamining/dataset}}~\cite{assistments17}, Slepemapy\footnote{\url{ttps://www.fi.muni.cz/adaptivelearning/?a=data}}~\cite{slepemapy}, Algebra05\footnote{\url{https://pslcdatashop.web.cmu.edu/KDDCup/}}~\cite{algebra}, [Prob, Linux, Database, Comp]\footnote{\url{https://github.com/wahr0411/PTADisc}}~\cite{ptadisc} and EdNet\footnote{\url{https://github.com/riiid/ednet}}~\cite{ednet}. These datasets vary significantly in scale and density, offering a comprehensive testbed for robustness. Detailed statistics are available in Appendix~\ref{app:benchmark}.

\paragraph{Synthetic Dataset.}
To evaluate robustness against the bias, we generated 6 synthetic datasets using a controllable MNAR mechanism with bias degrees $\gamma \in [0, 0.999]$. Generation details and statistics are provided in Appendix~\ref{app:syn_test}.

\paragraph{Baselines.}To validate TSDR as a generic strategy, we apply it to 6 SOTA baselines (DKT~\cite{DKT}, AKT~\cite{AKT}, simpleKT~\cite{simplekt}, FoLiBiKT~\cite{folibikt_bias}, SparseKT~\cite{sparsekt}, and DisKT~\cite{diskt_bias}) and compare them against their original versions. Details of baselines are provided in Appendix~\ref{app:baselines}.

\begin{table*}[t]
    \centering
    \resizebox{\textwidth}{!}{ 
        \begin{tabular}{llcccccc}
            \toprule
            \textbf{Dataset} & \textbf{Metric} & \textbf{DKT}& \textbf{AKT}& \textbf{simpleKT}& \textbf{FoLiBiKT}& \textbf{SparseKT}& \textbf{DisKT}\\
            \midrule
            
            \multirow{3}{*}{\textbf{Spanish}} 
            & AUC $\uparrow$ & 0.7798 / \textbf{0.8018} {\footnotesize ($\uparrow$ 2.82\%)}
                            & 0.7967 / \textbf{0.8034} {\footnotesize ($\uparrow$ 0.84\%)}
                            & 0.8098 / \textbf{0.8188} {\footnotesize ($\uparrow$ 1.11\%)}
                            & 0.7992 / \textbf{0.8067} {\footnotesize ($\uparrow$ 0.94\%)}
                            & 0.8019 / \textbf{0.8197} {\footnotesize ($\uparrow$ 2.22\%)}
                            & 0.8243 / \textbf{0.8259} {\footnotesize ($\uparrow$ 0.19\%)} \\
            & ACC $\uparrow$ & 0.7250 / \textbf{0.7362} {\footnotesize ($\uparrow$ 1.54\%)} 
                            & 0.7386 / \textbf{0.7420} {\footnotesize ($\uparrow$ 0.46\%)}
                            & 0.7405 / \textbf{0.7494} {\footnotesize ($\uparrow$ 1.20\%)}
                            & 0.7332 / \textbf{0.7388} {\footnotesize ($\uparrow$ 0.76\%)} 
                            & 0.7324 / \textbf{0.7510} {\footnotesize ($\uparrow$ 2.54\%)}
                            & 0.7520 / \textbf{0.7582} {\footnotesize ($\uparrow$ 0.82\%)} \\
            & RMSE $\downarrow$ & 0.4301 / \textbf{0.4195} {\footnotesize ($\downarrow$ 2.46\%)} 
                            & 0.4294 / \textbf{0.4241} {\footnotesize ($\downarrow$ 1.23\%)} 
                            & 0.4214 / \textbf{0.4186} {\footnotesize ($\downarrow$ 0.66\%)}
                            & 0.4292 / \textbf{0.4264} {\footnotesize ($\downarrow$ 0.65\%)}
                            & 0.4319 / \textbf{0.4204} {\footnotesize ($\downarrow$ 2.66\%)}
                            & 0.4199 / \textbf{0.4096} {\footnotesize ($\downarrow$ 2.45\%)} \\
            \midrule

            \multirow{3}{*}{\textbf{ASSISTments17}} 
            & AUC & 0.6358 / \textbf{0.6472} {\footnotesize ($\uparrow$ 1.79\%)} 
                  & 0.6843 / \textbf{0.7012} {\footnotesize ($\uparrow$ 2.47\%)} 
                  & 0.6692 / \textbf{0.6905} {\footnotesize ($\uparrow$ 3.18\%)} 
                  & 0.6826 / \textbf{0.6994} {\footnotesize ($\uparrow$ 2.46\%)} 
                  & 0.6800 / \textbf{0.7124} {\footnotesize ($\uparrow$ 4.76\%)} 
                  & 0.7223 / \textbf{0.7324} {\footnotesize ($\uparrow$ 1.40\%)} \\
            & ACC & 0.6172 / \textbf{0.6249} {\footnotesize ($\uparrow$ 1.25\%)} 
                  & 0.6525 / \textbf{0.6617} {\footnotesize ($\uparrow$ 1.41\%)} 
                  & 0.6430 / \textbf{0.6590} {\footnotesize ($\uparrow$ 2.49\%)} 
                  & 0.6538 / \textbf{0.6619} {\footnotesize ($\uparrow$ 1.24\%)} 
                  & 0.6442 / \textbf{0.6693} {\footnotesize ($\uparrow$ 3.90\%)}
                  & 0.6621 / \textbf{0.6784} {\footnotesize ($\uparrow$ 2.46\%)} \\
            & RMSE & 0.4807 / \textbf{0.4774} {\footnotesize ($\downarrow$ 0.69\%)}
                  & 0.4762 / \textbf{0.4651} {\footnotesize ($\downarrow$ 2.33\%)}
                  & 0.4791 / \textbf{0.4682} {\footnotesize ($\downarrow$ 2.28\%)} 
                  & 0.4750 / \textbf{0.4659} {\footnotesize ($\downarrow$ 1.92\%)}
                  & 0.4767 / \textbf{0.4624} {\footnotesize ($\downarrow$ 3.00\%)}
                  & 0.4740 / \textbf{0.4641} {\footnotesize ($\downarrow$ 2.09\%)} \\
            \midrule

            \multirow{3}{*}{\textbf{Slepemapy}} 
            & AUC & 0.6806 / \textbf{0.6937} {\footnotesize ($\uparrow$ 1.92\%)} 
                  & 0.7127 / \textbf{0.7208} {\footnotesize ($\uparrow$ 1.14\%)} 
                  & 0.7108 / \textbf{0.7143} {\footnotesize ($\uparrow$ 0.49\%)} 
                  & 0.7120 / \textbf{0.7199} {\footnotesize ($\uparrow$ 1.11\%)} 
                  & 0.7478 / \textbf{0.7758} {\footnotesize ($\uparrow$ 3.74\%)} 
                  & 0.7283 / \textbf{0.7345} {\footnotesize ($\uparrow$ 0.85\%)} \\
            & ACC & 0.7740 / \textbf{0.7795} {\footnotesize ($\uparrow$ 0.71\%)} 
                  & 0.7796 / \textbf{0.7833} {\footnotesize ($\uparrow$ 0.47\%)} 
                  & 0.7743 / \textbf{0.7830} {\footnotesize ($\uparrow$ 1.12\%)} 
                  & 0.7799 / \textbf{0.7843} {\footnotesize ($\uparrow$ 0.56\%)} 
                  & 0.7845 / \textbf{0.7944} {\footnotesize ($\uparrow$ 1.26\%)} 
                  & 0.7805 / \textbf{0.7861} {\footnotesize ($\uparrow$ 0.72\%)} \\
            & RMSE & 0.4021 / \textbf{0.3977} {\footnotesize ($\downarrow$ 1.09\%)} 
                  & 0.3948 / \textbf{0.3915} {\footnotesize ($\downarrow$ 0.84\%)} 
                  & 0.3969 / \textbf{0.3927} {\footnotesize ($\downarrow$ 1.06\%)} 
                  & 0.3962 / \textbf{0.3915} {\footnotesize ($\downarrow$ 1.19\%)} 
                  & 0.3866 / \textbf{0.3768} {\footnotesize ($\downarrow$ 2.53\%)} 
                  & 0.3901 / \textbf{0.3886} {\footnotesize ($\downarrow$ 0.38\%)} \\
            \midrule

            \multirow{3}{*}{\textbf{Algebra05}} 
            & AUC & 0.7538 / \textbf{0.7669} {\footnotesize ($\uparrow$ 1.74\%)}
                  & 0.7698 / \textbf{0.7797} {\footnotesize ($\uparrow$ 1.29\%)}
                  & 0.7677 / \textbf{0.7809} {\footnotesize ($\uparrow$ 1.71\%)} 
                  & 0.7718 / \textbf{0.7781} {\footnotesize ($\uparrow$ 0.82\%)}
                  & 0.7677 / \textbf{0.7772} {\footnotesize ($\uparrow$ 1.24\%)}
                  & 0.7769 / \textbf{0.7862} {\footnotesize ($\uparrow$ 1.20\%)} \\
            & ACC & 0.7748 / \textbf{0.7852} {\footnotesize ($\uparrow$ 1.34\%)} 
                  & 0.7794 / \textbf{0.7825} {\footnotesize ($\uparrow$ 0.40\%)}
                  & 0.7734 / \textbf{0.7834} {\footnotesize ($\uparrow$ 1.29\%)} 
                  & 0.7813 / \textbf{0.7857} {\footnotesize ($\uparrow$ 0.56\%)}
                  & 0.7694 / \textbf{0.7876} {\footnotesize ($\uparrow$ 2.37\%)}
                  & 0.7817 / \textbf{0.7912} {\footnotesize ($\uparrow$ 1.22\%)} \\
            & RMSE & 0.4013 / \textbf{0.3895} {\footnotesize ($\downarrow$ 2.94\%)} 
                  & 0.3955 / \textbf{0.3937} {\footnotesize ($\downarrow$ 0.46\%)}
                  & 0.3980 / \textbf{0.3901} {\footnotesize ($\downarrow$ 1.98\%)}
                  & 0.3911 / \textbf{0.3904} {\footnotesize ($\downarrow$ 0.18\%)}
                  & 0.3954 / \textbf{0.3891} {\footnotesize ($\downarrow$ 1.59\%)} 
                  & 0.3958 / \textbf{0.3886} {\footnotesize ($\downarrow$ 1.82\%)} \\
            \midrule

            \multirow{3}{*}{\textbf{Prob}} 
            & AUC & 0.7278 / \textbf{0.7319} {\footnotesize ($\uparrow$ 0.56\%)}
                  & 0.7347 / \textbf{0.7719} {\footnotesize ($\uparrow$ 5.06\%)} 
                  & 0.7447 / \textbf{0.7716} {\footnotesize ($\uparrow$ 3.61\%)} 
                  & 0.7409 / \textbf{0.7703} {\footnotesize ($\uparrow$ 3.97\%)} 
                  & 0.7597 / \textbf{0.7775} {\footnotesize ($\uparrow$ 2.34\%)} 
                  & 0.7617 / \textbf{0.7870} {\footnotesize ($\uparrow$ 3.32\%)} \\
            & ACC & 0.6770 / \textbf{0.6822} {\footnotesize ($\uparrow$ 0.77\%)} 
                  & 0.6916 / \textbf{0.7136} {\footnotesize ($\uparrow$ 3.18\%)} 
                  & 0.6956 / \textbf{0.7110} {\footnotesize ($\uparrow$ 2.21\%)} 
                  & 0.6956 / \textbf{0.7126} {\footnotesize ($\uparrow$ 2.44\%)} 
                  & 0.7062 / \textbf{0.7197} {\footnotesize ($\uparrow$ 1.91\%)} 
                  & 0.7023 / \textbf{0.7262} {\footnotesize ($\uparrow$ 3.40\%)} \\
            & RMSE & 0.4527 / \textbf{0.4510} {\footnotesize ($\downarrow$ 0.38\%)} 
                  & 0.4527 / \textbf{0.4407} {\footnotesize ($\downarrow$ 2.65\%)} 
                  & 0.4556 / \textbf{0.4466} {\footnotesize ($\downarrow$ 1.98\%)}
                  & 0.4501 / \textbf{0.4403} {\footnotesize ($\downarrow$ 2.18\%)}
                  & 0.4475 / \textbf{0.4441} {\footnotesize ($\downarrow$ 0.76\%)} 
                  & 0.4460 / \textbf{0.4364} {\footnotesize ($\downarrow$ 2.15\%)} \\
            \midrule

            \multirow{3}{*}{\textbf{Linux}} 
            & AUC & 0.7477 / \textbf{0.7521} {\footnotesize ($\uparrow$ 0.59\%)}
                  & \textbf{0.8164} / 0.8162 {\footnotesize ($\downarrow$ 0.02\%)}
                  & 0.8151 / \textbf{0.8162} {\footnotesize ($\uparrow$ 0.13\%)}
                  & 0.8162 / \textbf{0.8163} {\footnotesize ($\uparrow$ 0.01\%)}
                  & 0.8315 / \textbf{0.8413} {\footnotesize ($\uparrow$ 1.18\%)}
                  & 0.8269 / \textbf{0.8278} {\footnotesize ($\uparrow$ 0.11\%)}\\
            & ACC & 0.7690 / \textbf{0.7708} {\footnotesize ($\uparrow$ 0.23\%)}
                  & 0.7975 / \textbf{0.7979} {\footnotesize ($\uparrow$ 0.05\%)}
                  & 0.7963 / \textbf{0.7980} {\footnotesize ($\uparrow$ 0.21\%)}
                  & 0.7970 / \textbf{0.7976} {\footnotesize ($\uparrow$ 0.08\%)}
                  & 0.8039 / \textbf{0.8084} {\footnotesize ($\uparrow$ 0.56\%)}
                  & 0.8045 / \textbf{0.8070} {\footnotesize ($\uparrow$ 0.31\%)}\\
            & RMSE & 0.4009 / \textbf{0.3996} {\footnotesize ($\downarrow$ 0.32\%)} 
                  & 0.3760 / \textbf{0.3750} {\footnotesize ($\downarrow$ 0.27\%)} 
                  & 0.3763 / \textbf{0.3751} {\footnotesize ($\downarrow$ 0.32\%)} 
                  & 0.3759 / \textbf{0.3750} {\footnotesize ($\downarrow$ 0.24\%)} 
                  & 0.3702 / \textbf{0.3639} {\footnotesize ($\downarrow$ 1.70\%)} 
                  & 0.3705 / \textbf{0.3692} {\footnotesize ($\downarrow$ 0.35\%)} \\
            \midrule

            \multirow{3}{*}{\textbf{Database}} 
            & AUC & 0.7467 / \textbf{0.7493} {\footnotesize ($\uparrow$ 0.35\%)}
                  & 0.8064 / \textbf{0.8079} {\footnotesize ($\uparrow$ 0.19\%)}
                  & 0.8060 / \textbf{0.8067} {\footnotesize ($\uparrow$ 0.09\%)}
                  & 0.8067 / \textbf{0.8071} {\footnotesize ($\uparrow$ 0.05\%)}
                  & 0.8359 / \textbf{0.8422} {\footnotesize ($\uparrow$ 0.75\%)}
                  & 0.8133 / \textbf{0.8156} {\footnotesize ($\uparrow$ 0.28\%)}\\
            & ACC & \textbf{0.8294} / 0.8285 {\footnotesize ($\downarrow$ 0.11\%)} 
                  & 0.8383 / \textbf{0.8432} {\footnotesize ($\uparrow$ 0.58\%)}
                  & 0.8395 / \textbf{0.8435} {\footnotesize ($\uparrow$ 0.48\%)}
                  & 0.8392 / \textbf{0.8438} {\footnotesize ($\uparrow$ 0.55\%)}
                  & 0.8478 / \textbf{0.8498} {\footnotesize ($\uparrow$ 0.24\%)}
                  & 0.8460 / \textbf{0.8483} {\footnotesize ($\uparrow$ 0.27\%)} \\
            & RMSE & 0.3562 / \textbf{0.3561} {\footnotesize ($\downarrow$ 0.03\%)}
                  & 0.3415 / \textbf{0.3380} {\footnotesize ($\downarrow$ 1.02\%)} 
                  & 0.3408 / \textbf{0.3382} {\footnotesize ($\downarrow$ 0.76\%)}
                  & 0.3410 / \textbf{0.3381} {\footnotesize ($\downarrow$ 0.85\%)}
                  & 0.3302 / \textbf{0.3274} {\footnotesize ($\downarrow$ 0.85\%)}
                  & 0.3360 / \textbf{0.3343} {\footnotesize ($\downarrow$ 0.51\%)}\\
            \midrule

            \multirow{3}{*}{\textbf{Comp}} 
            & AUC & 0.7073 / \textbf{0.7157} {\footnotesize ($\uparrow$ 1.19\%)} 
                  & 0.7912 / \textbf{0.7958} {\footnotesize ($\uparrow$ 0.58\%)} 
                  & 0.7929 / \textbf{0.7958} {\footnotesize ($\uparrow$ 0.37\%)} 
                  & 0.7914 / \textbf{0.7960} {\footnotesize ($\uparrow$ 0.58\%)} 
                  & 0.8142 / \textbf{0.8263} {\footnotesize ($\uparrow$ 1.49\%)} 
                  & 0.7986 / \textbf{0.8037} {\footnotesize ($\uparrow$ 0.64\%)} \\
            & ACC & 0.8056 / \textbf{0.8077} {\footnotesize ($\uparrow$ 0.26\%)} 
                  & 0.8201 / \textbf{0.8232} {\footnotesize ($\uparrow$ 0.38\%)} 
                  & 0.8195 / \textbf{0.8243} {\footnotesize ($\uparrow$ 0.59\%)} 
                  & 0.8207 / \textbf{0.8245} {\footnotesize ($\uparrow$ 0.46\%)} 
                  & 0.8266 / \textbf{0.8314} {\footnotesize ($\uparrow$ 0.58\%)} 
                  & 0.8111 / \textbf{0.8298} {\footnotesize ($\uparrow$ 2.31\%)} \\
            & RMSE & 0.3794 / \textbf{0.3773} {\footnotesize ($\downarrow$ 0.55\%)} 
                  & 0.3591 / \textbf{0.3558} {\footnotesize ($\downarrow$ 0.92\%)} 
                  & 0.3585 / \textbf{0.3558} {\footnotesize ($\downarrow$ 0.75\%)} 
                  & 0.3587 / \textbf{0.3552} {\footnotesize ($\downarrow$ 0.98\%)} 
                  & 0.3520 / \textbf{0.3454} {\footnotesize ($\downarrow$ 1.87\%)} 
                  & 0.3656 / \textbf{0.3529} {\footnotesize ($\downarrow$ 3.47\%)} \\
            \midrule

            \multirow{3}{*}{\textbf{EdNet}} 
            & AUC & 0.6602 / \textbf{0.6678} {\footnotesize ($\uparrow$ 1.15\%)} 
                  & 0.6868 / \textbf{0.6936} {\footnotesize ($\uparrow$ 0.99\%)} 
                  & 0.6924 / \textbf{0.6977} {\footnotesize ($\uparrow$ 0.77\%)} 
                  & 0.6885 / \textbf{0.6934} {\footnotesize ($\uparrow$ 0.71\%)} 
                  & 0.7282 / \textbf{0.7365} {\footnotesize ($\uparrow$ 1.14\%)} 
                  & \textbf{0.7273} / 0.7239 {\footnotesize ($\downarrow$ 0.47\%)} \\
            & ACC & 0.6189 / \textbf{0.6270} {\footnotesize ($\uparrow$ 1.31\%)} 
                  & 0.6288 / \textbf{0.6351} {\footnotesize ($\uparrow$ 1.00\%)} 
                  & 0.6324 / \textbf{0.6451} {\footnotesize ($\uparrow$ 2.01\%)} 
                  & 0.6337 / \textbf{0.6402} {\footnotesize ($\uparrow$ 1.03\%)} 
                  & 0.6698 / \textbf{0.6800} {\footnotesize ($\uparrow$ 1.52\%)} 
                  & 0.6614 / \textbf{0.6698} {\footnotesize ($\uparrow$ 1.27\%)} \\
            & RMSE & 0.4792 / \textbf{0.4769} {\footnotesize ($\downarrow$ 0.48\%)} 
                  & 0.4848 / \textbf{0.4822} {\footnotesize ($\downarrow$ 0.54\%)} 
                  & 0.4872 / \textbf{0.4864} {\footnotesize ($\downarrow$ 0.16\%)} 
                  & 0.4813 / \textbf{0.4783} {\footnotesize ($\downarrow$ 0.62\%)} 
                  & 0.4744 / \textbf{0.4672} {\footnotesize ($\downarrow$ 1.52\%)} 
                  & 0.4846 / \textbf{0.4786} {\footnotesize ($\downarrow$ 1.24\%)} \\

            \bottomrule
        \end{tabular}
    }
    \caption{
        Comprehensive performance comparison. 
        Rows denote datasets; columns denote models.
        Format: ``\textit{Original} / \textbf{\textit{+TSDR}} {\footnotesize(\%Improv.)}''.
    }
    \label{tab:results_transposed}
\end{table*}

\paragraph{Implementation Details.}We adopt the pipeline from the work in ~\cite{diskt_bias} using student-stratified 5-fold cross-validation. A 10\% validation set determines early stopping after 15 epochs of non-improvement. We truncate sequences to the latest 50 interactions and train models using Adam with a learning rate of 0.001, a batch size of 64, and an embedding dimension of 64. The random seed and dropout rate are set to 42 and 0.05, respectively. Furthermore, we tune the hyperparameter $\lambda$ that controls loss strength within [0.1, 0.3, 0.5, 0.7, 1, 2]. All experiments run on an NVIDIA RTX 4090 GPU and are evaluated via AUC, ACC, and RMSE following ~\cite{lpkt,folibikt_bias,diskt_bias}. The source code is available at \url{https://github.com/plzhan/TSDR}.

\subsection{Results}
\subsubsection{Empirical Study (RQ1)}
Table~\ref{tab:results_transposed} summarizes the performance of TSDR applied to six baseline models across nine datasets. Comparing the original performance with the TSDR-enhanced results, we have the following observations: 1) Regarding the primary evaluation metric of AUC, TSDR yields improvements across diverse model architectures. These gains are particularly notable on datasets characterized by high sparsity or strong student autonomy. For instance, TSDR improves AKT by 5.06\% on the Prob dataset and SparseKT by 4.76\% on Assist17. These results suggest that our framework helps recover mastery patterns by mitigating bias in MNAR data. 2) The magnitude of improvement appears to correlate with the data collection mechanism. While TSDR provides substantial gains on open-ended platforms, the improvements on EdNet and Linux are relatively limited, as exemplified by an AUC gain of approximately 1\% on EdNet. This behavior is likely attributable to EdNet's specific design, which enforces a mandatory ``bundle'' completion policy~\cite{ednet}. Such structural constraints tend to limit selection bias, rendering the missing data mechanism closer to random compared to other datasets, thereby reducing the potential scope for bias correction.
3) Regarding secondary metrics such as ACC and RMSE, TSDR consistently improves calibration. Even on datasets where AUC gains plateau, such as Linux and Database, TSDR frequently reduces RMSE and improves ACC. A specific example is the 1.70\% reduction for SparseKT on Linux. This indicates that counterfactual imputation may contribute to better-calibrated probability estimates, even when the ranking performance remains stable.

\subsubsection{Experiments on Synthetic Data (RQ2)}
To evaluate robustness, we tested TSDR on synthetic datasets with MNAR degrees ($\gamma$) ranging from $0.0$ to $0.999$. 
As shown in Figure~\ref{fig:mnar_trends}, we observe varying degrees of performance degradation across all baselines as the bias severity increases, particularly visible in AUC. 
However, equipping SOTA models with TSDR consistently yields improvements across the entire spectrum. 
Specifically for RMSE, with the exception of DKT, TSDR is highly effective at reducing error when MNAR is low, although this advantage diminishes slightly beyond $\gamma > 0.8$. 
Since TSDR is a model-agnostic strategy, it relies on the backbone's modeling capacity to maximize efficacy: we observe the most significant relative lift in AKT, while DisKT achieves the best overall performance, likely attributed to its disentangled modeling and causal subtraction mechanism~\cite{diskt_bias}.
Furthermore, the positive results at $\gamma=0.0$ indicate that our framework is ``safe'', avoiding performance degradation even when selection bias is minimal, likely due to the regularization effect of the counterfactual imputation (see Appendix~\ref{app:syn_test} for numerical results).
\begin{figure}[ht] 
    \centering
    \includegraphics[width=\columnwidth]{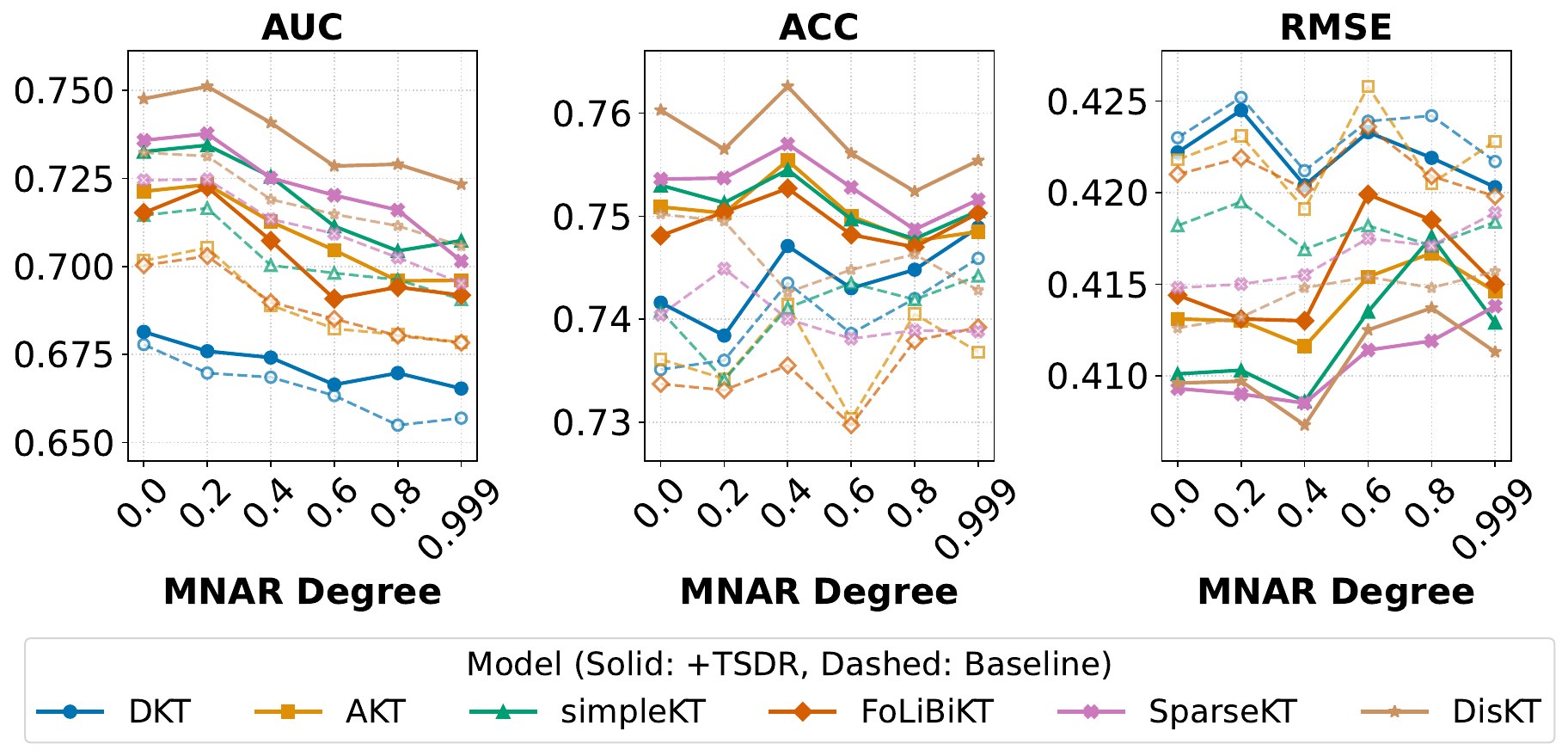}
    \caption{
        Performance of different MNAR levels on synthetic data.
    }
    \label{fig:mnar_trends}
\end{figure}
\subsubsection{Sensitivity Analysis (RQ3)}
We focus on the prob dataset, where our model achieves superior performance, to highlight the experimental results.
\begin{figure}[ht]
    \centering
    \includegraphics[width=\linewidth]{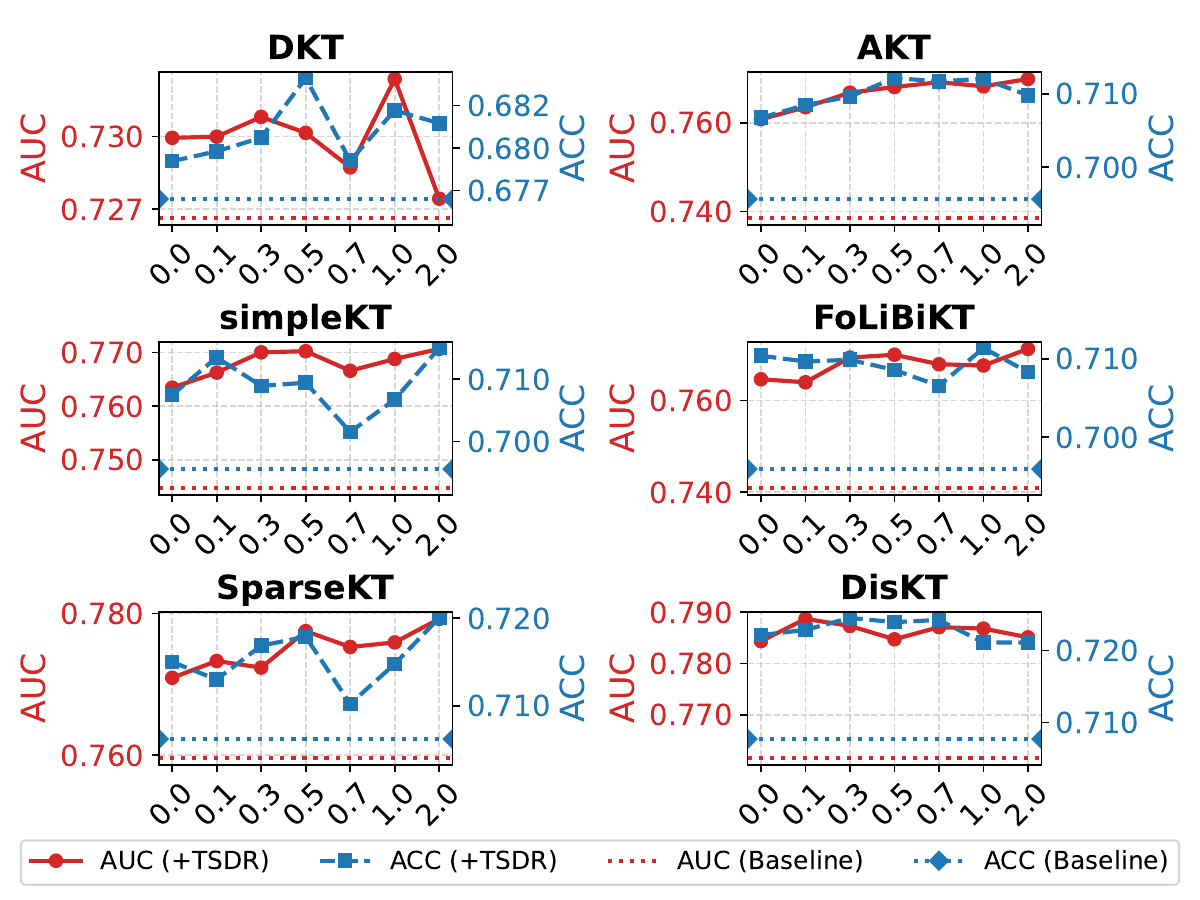}
    \caption{The AUC and ACC of TSDR on varying number of 
$\myparam$.}
    \label{fig:sensitivity}
\end{figure}
Figure~\ref{fig:sensitivity} shows the impact of the hyperparameter $\lambda$ on TSDR. 
We observe that incorporating the temporal smoothness constraint yields superior performance compared to the vanilla DR ($\lambda=0$) within an appropriate range, while outperforming the original baselines.
The method exhibits strong robustness, achieving a stable ``sweet spot'' generally in $\lambda \in [0.3, 1.0]$. 
Despite minor fluctuations in the case, e.g., DKT at $\myparam=0.7$, the overall performance remains high, indicating that TSDR is effective without requiring meticulous tuning.
\section{Conclusion}
This paper introduces TSDR, a framework that addresses selection bias in Knowledge Tracing by integrating a doubly robust estimator with a theoretically-guided temporal smoothness constraint. Experiments across multiple benchmarks demonstrate its superior accuracy and stability over state-of-the-art methods. We acknowledge that the joint optimization of auxiliary modules incurs additional training overhead; however, this cost is confined to the offline phase and does not compromise online inference efficiency. Ultimately, TSDR offers a principled, causally-grounded paradigm for learning from observational data, paving the way for more reliable intelligent tutoring systems.

\section{Acknowledgments}
This research was supported in part by National Science and Technology Major Project (2021ZD0111501), Natural Science Foundation of China (U24A20233, 62206064, 62206061, 62476163, 62406078, 62406080), the Guangdong Basic and Applied Basic Research Foundation (2025A1515010172, 2023B1515120020), the Guangzhou Basic and Applied Basic Research Foundation (2024A04J4384), and CCF-DiDi GAIA Collaborative Research Funds (CCF-DiDi GAIA 202521).
\bibliographystyle{named}
\bibliography{ijcai26}
\clearpage
\appendix
\begin{appendices}
The supplementary material is organized as follows: Appendix~\ref{app:benchmark} presents the sources of the public datasets and their statistical information; Appendix ~\ref{app:syn_test} describes the details of how our simulated data were generated, including the statistical properties of the data and tables showing the performance of different models on them; 
Appendix~\ref{app:baselines} introduces the baseline models we employed; 
Appendix~\ref{app:ablation} further supplements the ablation study and computational overhead analysis in response to the reviewers' comments. It summarizes the contribution of each component in the proposed DR module and reports the additional training cost introduced by DR;
Appendix~\ref{sec:theoretical_main} outlines our main theoretical results to facilitate understanding of the proposed method; and Appendix~\ref{sec:detailed_proofs} provides the proofs corresponding to the theoretical results in Appendix~\ref{sec:theoretical_main}.
\section{Details of Benchmark Dataset}
\label{app:benchmark}
We evaluate the performance of TSDR on 9 public datasets:
\begin{itemize}
    \item \textbf{Spanish}~\cite{spanish}: The spanish dataset is from middle-school students practicing Spanish exercises, including translations and applications of basic skills like verb conjugation, over a 15-week semester.

    \item \textbf{ASSISTments17}~\cite{assistments17}: The ASSISTments17 dataset is derived from the 2017 Longitudinal Data Mining Competition, containing longitudinal student interaction data collected from the ASSISTments online tutoring system.

    \item \textbf{Slepemapy}~\cite{slepemapy}: The slepemapy dataset originates from slepemapy.cz, an online platform dedicated to the adaptive practice of geography facts.

    \item \textbf{Algebra05}~\cite{algebra}: The algebra05 dataset comes from the KDD Cup 2010 EDM Challenge, containing detailed step-level student responses to algebra questions from the Carnegie Learning system.

    \item \textbf{Prob, Linux, Database, Comp (PTADisc)}~\cite{ptadisc}: The prob, comp, linux, database datasets are collected from the Programming Teaching Assistant platform, specifically from course exercises in Probability and Statistics, Computational Thinking, Linux System, and Database Technology and Application.

    \item \textbf{EdNet}~\cite{ednet}: The ednet dataset, collected by the multi-platform AI tutoring service Santa, stands as the largest publicly released interactive educational system dataset to date.
\end{itemize}

\begin{table}[ht]
    \centering
    \resizebox{\columnwidth}{!}{ 
        \begin{tabular}{ccccc} 
            \toprule
            \textbf{Dataset} & \textbf{\# Users} & \textbf{\# Questions} & \textbf{\# concepts} & \textbf{\# Interactions} \\ 
            \midrule
            Spanish       & 182 & 409 & 221 & 578,726 \\
            ASSIST17      & 1,708 & 3,162 & 411 & 934,638 \\
            Slepemapy     & 5,000 & 2,723 & 1,391 & 625,523 \\
            Algebra05     & 571 & 173,113 & 271 & 607,014 \\
            Prob          & 512  & 1,054  & 247  & 42,869 \\
            Linux         & 4,375  & 2,672  & 281  & 365,027 \\
            Database      & 5,488  & 3,388  & 291  & 990,468 \\
            Comp          & 5,000  & 7,460  & 445  & 668,927 \\
            EdNet         & 5,000  & 12,002  & 1,769  & 789,812 \\
            \bottomrule
        \end{tabular}
    }
    \caption{Statistics of the 9 datasets}
    \label{tab:stats}
\end{table}
Table~\ref{tab:stats} presents the statistics of the processed datasets, excluding student sequences with fewer than 5 interactions.

\section{Details of the Simulation Study}
\label{app:syn_test}
This section details the data generation process for our simulation study. The base dataset was generated for 1,000 students and 200 questions, with the questions covering 20 distinct knowledge components. The experimental results on this synthetic dataset are presented in Table~\ref{tab:synthetic_results}. The process incorporates four key characteristics:

\begin{enumerate}
    \item \textbf{Long-tail Distribution}: To mimic the real-world scenario where some knowledge points are practiced far more often than others, the frequency of items (questions) was modeled to follow a Zipfian distribution with a shape parameter of $\alpha=0.8$.

    \item \textbf{Dynamic Learning}: A student's mastery of a knowledge component is not static. We simulated dynamic learning where a student's mastery level evolves over time. Upon answering a question correctly, their mastery of the associated knowledge components increases with a learning rate of $0.55$.

    \item \textbf{Response Noise}: Student responses are not always a perfect reflection of their true mastery. To account for this uncertainty, we introduced a guessing probability of $0.1$ (the probability of a non-proficient student answering correctly) and a slipping probability of $0.05$ (the probability of a proficient student answering incorrectly).

    \item \textbf{Controllable MNAR Mechanism}: This is the core mechanism for simulating selection bias. Interactions are selectively marked as "skipped" (missing) based on the ground-truth probability of success, denoted as $p$. This simulates students strategically avoiding questions they perceive as too difficult or too easy. The skipping probability is determined by the MNAR degree $\gamma$ and the extremeness of $p$:
    \begin{itemize}
        \item Too difficult ($p < 0.25$): The skipping probability is $\gamma \cdot (1-p)$, meaning students are more likely to skip harder questions (lower $p$).
        \item Too trivial ($p > 0.75$): The skipping probability is $\gamma \cdot p$, meaning students are more likely to skip easier questions (higher $p$).
    \end{itemize}
    To create a spectrum of bias scenarios for our robustness evaluation, we generated six distinct datasets by setting the MNAR degree $\gamma$ to values across the set $\{0, 0.2, 0.4, 0.6, 0.8, 0.999\}$. A $\gamma$ of 0 corresponds to a dataset with no selection bias, while a $\gamma$ of 0.999 represents a very strong selection bias. Dataset statistics are summarized in Table~\ref{tab:synthetic_stats}.
\end{enumerate}

\begin{table}[ht]
    \centering
    \resizebox{\columnwidth}{!}{ 
        \begin{tabular}{ccccc}
            \toprule
            \textbf{Dataset} & \textbf{Degree ($\gamma$)} & \textbf{\# Observed} & \textbf{\# Skipped} & \textbf{\# Interactions} \\
            \midrule
            Synthetic-1 & 0.0 & 114482 & 0 & 48,967 \\ 
            Synthetic-2 & 0.2 & 116791 & 12922 & 48,476 \\ 
            Synthetic-3 & 0.4 & 114214 & 24669 & 48,578 \\ 
            Synthetic-4 & 0.6 & 114387 & 36055 & 48,091 \\ 
            Synthetic-5 & 0.8 & 114789 & 46627 & 47,826 \\ 
            Synthetic-6 & 0.999 & 114191 & 55770 & 47,645 \\ 
            \bottomrule
        \end{tabular}
    }
    \caption{Statistics of the generated synthetic datasets under varying degrees of Missing Not At Random (MNAR).}
    \label{tab:synthetic_stats}
\end{table}

\begin{table*}[ht]
    \centering
    \resizebox{\textwidth}{!}{ 
        \begin{tabular}{llcccccc}
            \toprule
            \textbf{Dataset} & \textbf{Metric} & \textbf{DKT} & \textbf{AKT}& \textbf{simpleKT}& \textbf{FoLiBiKT}& \textbf{SparseKT}& \textbf{DisKT}\\
            \midrule
            
            \multirow{3}{*}{\shortstack{\textbf{Synthetic}\\(MNAR 0.999)}} 
           & AUC $\uparrow$  & 0.6569 / \textbf{0.6653} {\footnotesize ($\uparrow$ 1.28\%)} 
                   & 0.6784 / \textbf{0.6960} {\footnotesize ($\uparrow$ 2.59\%)} 
                   & 0.6906 / \textbf{0.7073} {\footnotesize ($\uparrow$ 2.42\%)} 
                   & 0.6783 / \textbf{0.6918} {\footnotesize ($\uparrow$ 1.99\%)} 
                   & 0.6952 / \textbf{0.7015} {\footnotesize ($\uparrow$ 0.91\%)} 
                   & 0.7058 / \textbf{0.7233} {\footnotesize ($\uparrow$ 2.48\%)} \\
            & ACC $\uparrow$  & 0.7459 / \textbf{0.7489} {\footnotesize ($\uparrow$ 0.40\%)} 
                   & 0.7368 / \textbf{0.7485} {\footnotesize ($\uparrow$ 1.59\%)} 
                   & 0.7442 / \textbf{0.7505} {\footnotesize ($\uparrow$ 0.85\%)} 
                   & 0.7392 / \textbf{0.7503} {\footnotesize ($\uparrow$ 1.50\%)} 
                   & 0.7388 / \textbf{0.7516} {\footnotesize ($\uparrow$ 1.73\%)} 
                   & 0.7428 / \textbf{0.7554} {\footnotesize ($\uparrow$ 1.70\%)} \\
            & RMSE $\downarrow$ & 0.4217 / \textbf{0.4203} {\footnotesize ($\downarrow$ 0.33\%)} 
                   & 0.4228 / \textbf{0.4146} {\footnotesize ($\downarrow$ 1.94\%)} 
                   & 0.4184 / \textbf{0.4129} {\footnotesize ($\downarrow$ 1.31\%)} 
                   & 0.4198 / \textbf{0.4150} {\footnotesize ($\downarrow$ 1.14\%)} 
                   & 0.4189 / \textbf{0.4138} {\footnotesize ($\downarrow$ 1.22\%)} 
                   & 0.4157 / \textbf{0.4113} {\footnotesize ($\downarrow$ 1.06\%)} \\
            \midrule

            \multirow{3}{*}{\shortstack{\textbf{Synthetic}\\(MNAR 0.8)}} 
            & AUC  & 0.6549 / \textbf{0.6697} {\footnotesize ($\uparrow$ 2.26\%)} 
                   & 0.6806 / \textbf{0.6959} {\footnotesize ($\uparrow$ 2.25\%)} 
                   & 0.6962 / \textbf{0.7044} {\footnotesize ($\uparrow$ 1.18\%)} 
                   & 0.6803 / \textbf{0.6941} {\footnotesize ($\uparrow$ 2.03\%)} 
                   & 0.7025 / \textbf{0.7160} {\footnotesize ($\uparrow$ 1.92\%)} 
                   & 0.7115 / \textbf{0.7290} {\footnotesize ($\uparrow$ 2.46\%)} \\
            & ACC  & 0.7420 / \textbf{0.7448} {\footnotesize ($\uparrow$ 0.38\%)} 
                   & 0.7405 / \textbf{0.7476} {\footnotesize ($\uparrow$ 0.96\%)} 
                   & 0.7419 / \textbf{0.7478} {\footnotesize ($\uparrow$ 0.80\%)} 
                   & 0.7379 / \textbf{0.7470} {\footnotesize ($\uparrow$ 1.23\%)} 
                   & 0.7389 / \textbf{0.7487} {\footnotesize ($\uparrow$ 1.33\%)} 
                   & 0.7463 / \textbf{0.7524} {\footnotesize ($\uparrow$ 0.82\%)} \\
            & RMSE & 0.4242 / \textbf{0.4219} {\footnotesize ($\downarrow$ 0.54\%)} 
                   & 0.4205 / \textbf{0.4167} {\footnotesize ($\downarrow$ 0.90\%)} 
                   & \textbf{0.4171} / 0.4176 {\footnotesize ($\uparrow$ 0.12\%)} 
                   & 0.4209 / \textbf{0.4185} {\footnotesize ($\downarrow$ 0.57\%)} 
                   & 0.4171 / \textbf{0.4119} {\footnotesize ($\downarrow$ 1.25\%)} 
                   & 0.4148 / \textbf{0.4137} {\footnotesize ($\downarrow$ 0.27\%)} \\
            \midrule

            \multirow{3}{*}{\shortstack{\textbf{Synthetic}\\(MNAR 0.6)}} 
            & AUC  & 0.6633 / \textbf{0.6664} {\footnotesize ($\uparrow$ 0.47\%)} 
                   & 0.6823 / \textbf{0.7046} {\footnotesize ($\uparrow$ 3.27\%)}
                   & 0.6981 / \textbf{0.7114} {\footnotesize ($\uparrow$ 1.91\%)}
                   & 0.6851 / \textbf{0.6908} {\footnotesize ($\uparrow$ 0.83\%)}
                   & 0.7093 / \textbf{0.7202} {\footnotesize ($\uparrow$ 1.54\%)} 
                   & 0.7148 / \textbf{0.7285} {\footnotesize ($\uparrow$ 1.92\%)} \\
            & ACC  & 0.7386 / \textbf{0.7430} {\footnotesize ($\uparrow$ 0.60\%)} 
                   & 0.7303 / \textbf{0.7500} {\footnotesize ($\uparrow$ 2.70\%)}
                   & 0.7435 / \textbf{0.7497} {\footnotesize ($\uparrow$ 0.83\%)}
                   & 0.7297 / \textbf{0.7482} {\footnotesize ($\uparrow$ 2.54\%)}
                   & 0.7381 / \textbf{0.7528} {\footnotesize ($\uparrow$ 1.99\%)} 
                   & 0.7448 / \textbf{0.7561} {\footnotesize ($\uparrow$ 1.52\%)} \\
            & RMSE & 0.4239 / \textbf{0.4233} {\footnotesize ($\downarrow$ 0.14\%)} 
                   & 0.4258 / \textbf{0.4154} {\footnotesize ($\downarrow$ 2.44\%)}
                   & 0.4182 / \textbf{0.4135} {\footnotesize ($\downarrow$ 1.12\%)} 
                   & 0.4236 / \textbf{0.4199} {\footnotesize ($\downarrow$ 0.87\%)} 
                   & 0.4175 / \textbf{0.4114} {\footnotesize ($\downarrow$ 1.46\%)}
                   & 0.4154 / \textbf{0.4125} {\footnotesize ($\downarrow$ 0.70\%)} \\
            \midrule

            \multirow{3}{*}{\shortstack{\textbf{Synthetic}\\(MNAR 0.4)}} 
            & AUC  & 0.6685 / \textbf{0.6741} {\footnotesize ($\uparrow$ 0.84\%)} 
                   & 0.6890 / \textbf{0.7127} {\footnotesize ($\uparrow$ 3.44\%)} 
                   & 0.7003 / \textbf{0.7255} {\footnotesize ($\uparrow$ 3.60\%)} 
                   & 0.6898 / \textbf{0.7073} {\footnotesize ($\uparrow$ 2.54\%)} 
                   & 0.7134 / \textbf{0.7251} {\footnotesize ($\uparrow$ 1.64\%)} 
                   & 0.7190 / \textbf{0.7408} {\footnotesize ($\uparrow$ 3.03\%)} \\
            & ACC  & 0.7435 / \textbf{0.7471} {\footnotesize ($\uparrow$ 0.48\%)}
                   & 0.7414 / \textbf{0.7554} {\footnotesize ($\uparrow$ 1.89\%)} 
                   & 0.7411 / \textbf{0.7545} {\footnotesize ($\uparrow$ 1.81\%)} 
                   & 0.7355 / \textbf{0.7527} {\footnotesize ($\uparrow$ 2.34\%)}
                   & 0.7400 / \textbf{0.7570} {\footnotesize ($\uparrow$ 2.30\%)} 
                   & 0.7426 / \textbf{0.7626} {\footnotesize ($\uparrow$ 2.69\%)} \\
            & RMSE & 0.4212 / \textbf{0.4204} {\footnotesize ($\downarrow$ 0.19\%)}
                   & 0.4191 / \textbf{0.4116} {\footnotesize ($\downarrow$ 1.79\%)}
                   & 0.4169 / \textbf{0.4086} {\footnotesize ($\downarrow$ 1.99\%)} 
                   & 0.4202 / \textbf{0.4130} {\footnotesize ($\downarrow$ 1.71\%)}
                   & 0.4155 / \textbf{0.4085} {\footnotesize ($\downarrow$ 1.68\%)}
                   & 0.4148 / \textbf{0.4073} {\footnotesize ($\downarrow$ 1.81\%)} \\
            \midrule

            \multirow{3}{*}{\shortstack{\textbf{Synthetic}\\(MNAR 0.2)}} 
            & AUC  & 0.6697 / \textbf{0.6759} {\footnotesize ($\uparrow$ 0.93\%)} 
                   & 0.7053 / \textbf{0.7232} {\footnotesize ($\uparrow$ 2.54\%)}
                   & 0.7165 / \textbf{0.7344} {\footnotesize ($\uparrow$ 2.50\%)}
                   & 0.7029 / \textbf{0.7225} {\footnotesize ($\uparrow$ 2.79\%)}
                   & 0.7247 / \textbf{0.7377} {\footnotesize ($\uparrow$ 1.79\%)}
                   & 0.7313 / \textbf{0.7511} {\footnotesize ($\uparrow$ 2.71\%)} \\
            & ACC  & 0.7360 / \textbf{0.7384} {\footnotesize ($\uparrow$ 0.33\%)}
                   & 0.7342 / \textbf{0.7503} {\footnotesize ($\uparrow$ 2.19\%)}
                   & 0.7341 / \textbf{0.7513} {\footnotesize ($\uparrow$ 2.34\%)}
                   & 0.7331 / \textbf{0.7504} {\footnotesize ($\uparrow$ 2.36\%)}
                   & 0.7449 / \textbf{0.7537} {\footnotesize ($\uparrow$ 1.18\%)}
                   & 0.7495 / \textbf{0.7565} {\footnotesize ($\uparrow$ 0.93\%)} \\
            & RMSE & 0.4252 / \textbf{0.4245} {\footnotesize ($\downarrow$ 0.16\%)}
                   & 0.4231 / \textbf{0.4130} {\footnotesize ($\downarrow$ 2.39\%)}
                   & 0.4195 / \textbf{0.4103} {\footnotesize ($\downarrow$ 2.19\%)}
                   & 0.4219 / \textbf{0.4131} {\footnotesize ($\downarrow$ 2.09\%)}
                   & 0.4150 / \textbf{0.4090} {\footnotesize ($\downarrow$ 1.45\%)}
                   & 0.4132 / \textbf{0.4097} {\footnotesize ($\downarrow$ 0.85\%)} \\
            \midrule

            \multirow{3}{*}{\shortstack{\textbf{Synthetic}\\(MNAR 0.0)}} 
            & AUC  & 0.6778 / \textbf{0.6814} {\footnotesize ($\uparrow$ 0.53\%)}
                   & 0.7016 / \textbf{0.7213} {\footnotesize ($\uparrow$ 2.81\%)}
                   & 0.7145 / \textbf{0.7326} {\footnotesize ($\uparrow$ 2.53\%)}
                   & 0.7003 / \textbf{0.7152} {\footnotesize ($\uparrow$ 2.13\%)}
                   & 0.7245 / \textbf{0.7358} {\footnotesize ($\uparrow$ 1.56\%)} 
                   & 0.7323 / \textbf{0.7476} {\footnotesize ($\uparrow$ 2.09\%)} \\
            & ACC  & 0.7351 / \textbf{0.7416} {\footnotesize ($\uparrow$ 0.88\%)} 
                   & 0.7361 / \textbf{0.7509} {\footnotesize ($\uparrow$ 2.01\%)}
                   & 0.7408 / \textbf{0.7530} {\footnotesize ($\uparrow$ 1.65\%)}
                   & 0.7337 / \textbf{0.7481} {\footnotesize ($\uparrow$ 1.96\%)}
                   & 0.7404 / \textbf{0.7536} {\footnotesize ($\uparrow$ 1.78\%)}
                   & 0.7502 / \textbf{0.7603} {\footnotesize ($\uparrow$ 1.35\%)} \\
            & RMSE & 0.4230 / \textbf{0.4222} {\footnotesize ($\downarrow$ 0.19\%)} 
                   & 0.4218 / \textbf{0.4131} {\footnotesize ($\downarrow$ 2.06\%)}
                   & 0.4182 / \textbf{0.4101} {\footnotesize ($\downarrow$ 1.94\%)}
                   & 0.4210 / \textbf{0.4144} {\footnotesize ($\downarrow$ 1.57\%)}
                   & 0.4148 / \textbf{0.4093} {\footnotesize ($\downarrow$ 1.33\%)}
                   & 0.4126 / \textbf{0.4096} {\footnotesize ($\downarrow$ 0.73\%)} \\
            \bottomrule
        \end{tabular}
    }
    \caption{
        Performance comparison on synthetic datasets with varying degrees of MNAR. 
        Rows represent datasets (MNAR degree), and columns represent models.
        The results are presented in the format of ``\textit{Original} / \textbf{\textit{+TSDR}} \footnotesize{(\%Improv.)}''.
        $\uparrow$ indicates a numerical increase, $\downarrow$ indicates a numerical decrease.
        Bold values highlight the better performance in each pair.
    }
    
    \label{tab:synthetic_results}
\end{table*}

\section{Details of Baseline Models}
\label{app:baselines}

To evaluate the universality and effectiveness of our proposed TSDR framework, we selected 6 representative and state-of-the-art Knowledge Tracing models as backbones. These baselines include models built on Recurrent Neural Networks (RNNs) and attention mechanisms. The details of each baseline are as follows:

\begin{itemize}
    \item \textbf{DKT}~\cite{DKT}: DKT is the pioneering work that introduced deep learning to knowledge tracing. It utilizes LSTM networks to model the student's knowledge state as a hidden vector that evolves over time. DKT captures the sequential dependencies in the interaction history to predict future performance.
    
    \item \textbf{AKT}~\cite{AKT}: AKT adapts the Transformer architecture for KT tasks. It introduces a context-aware monotonic attention mechanism to weigh the importance of past interactions relative to the current question. AKT significantly improves interpretability and performance by explicitly capturing the decay of memory over time using exponential decay kernels.
    
    \item \textbf{simpleKT}~\cite{simplekt}: simpleKT challenges the complexity of previous models by demonstrating that a simple but strong baseline can outperform complex architectures. It focuses on capturing question-specific variations using specific embedding strategies without heavy reliance on complex attention layers, highlighting the importance of input feature engineering.
    
    \item \textbf{FoLiBiKT}~\cite{folibikt_bias}: Addressing the limitation that standard attention mechanisms may not fully capture human memory characteristics, FoLiBiKT incorporates a forgetting-aware linear bias into the attention scores. It explicitly models the forgetting curve based on the time interval and the number of intervening interactions, making the attention mechanism more cognitively plausible.
    
    \item \textbf{SparseKT}~\cite{sparsekt}: Recognizing that standard attention mechanisms often assign non-zero weights to irrelevant history (noise), sparseKT introduces a $k$-sparse attention mechanism. It filters out noisy or irrelevant historical interactions by retaining only the top-$k$ most relevant attention scores, thereby enhancing the model's robustness and focusing on the most critical evidence for mastery tracing.
    
    \item \textbf{DisKT}~\cite{diskt_bias}: disKT aims to solve the confounding issues in student performance prediction. It utilizes a disentangled representation learning framework to separate the student's intrinsic ability from the question's difficulty and other confounding factors. By explicitly modeling these factors separately, disKT achieves better generalization and interpretability, representing the state-of-the-art in handling cognitive biases in KT.
\end{itemize}


\begin{table}[ht]
    \centering
    \caption{Ablation study on the effects of joint learning and the placement of temporal smoothness (TS). ``TS on Imp.'' denotes temporal smoothness applied to the imputation branch; ``TS on KT'' denotes temporal smoothness applied to the KT backbone.}
    \label{tab:ablation_ts_joint}
    \resizebox{\linewidth}{!}{
    \begin{tabular}{cccccccc}
    \toprule
    Backbone & DR & Joint Learn. & TS on Imp. & TS on KT & AUC $\uparrow$ & ACC $\uparrow$ & RMSE $\downarrow$ \\
    \midrule
    
    \multirow{4}{*}{DKT}
    & $\checkmark$ &  &  &  & 0.7258 & 0.6796 & 0.4533 \\
    & $\checkmark$ & $\checkmark$ &  &  & 0.7300 & 0.6792 & 0.4520 \\
    & $\checkmark$ & $\checkmark$ &  & $\checkmark$ & 0.7267 & 0.6753 & 0.4545 \\
    & $\checkmark$ & $\checkmark$ & $\checkmark$ &  & \textbf{0.7319} & \textbf{0.6822} & \textbf{0.4510} \\
    \midrule
    
    \multirow{4}{*}{AKT}
    & $\checkmark$ &  &  &  & 0.7420 & 0.6932 & 0.4576 \\
    & $\checkmark$ & $\checkmark$ &  &  & 0.7608 & 0.7067 & 0.4443 \\
    & $\checkmark$ & $\checkmark$ &  & $\checkmark$ & 0.7596 & 0.7055 & 0.4467 \\
    & $\checkmark$ & $\checkmark$ & $\checkmark$ &  & \textbf{0.7719} & \textbf{0.7136} & \textbf{0.4407} \\
    \midrule
    
    \multirow{4}{*}{SimpleKT}
    & $\checkmark$ &  &  &  & 0.7466 & 0.6840 & 0.4560 \\
    & $\checkmark$ & $\checkmark$ &  &  & 0.7635 & 0.7075 & \textbf{0.4465} \\
    & $\checkmark$ & $\checkmark$ &  & $\checkmark$ & 0.7599 & 0.7078 & 0.4504 \\
    & $\checkmark$ & $\checkmark$ & $\checkmark$ &  & \textbf{0.7716} & \textbf{0.7110} & 0.4466 \\
    \midrule
    
    \multirow{4}{*}{FoLiBiKT}
    & $\checkmark$ &  &  &  & 0.7415 & 0.6870 & 0.4638 \\
    & $\checkmark$ & $\checkmark$ &  &  & 0.7646 & 0.7104 & 0.4406 \\
    & $\checkmark$ & $\checkmark$ &  & $\checkmark$ & 0.7608 & 0.7083 & 0.4447 \\
    & $\checkmark$ & $\checkmark$ & $\checkmark$ &  & \textbf{0.7703} & \textbf{0.7126} & \textbf{0.4403} \\  
    \midrule
    
    \multirow{4}{*}{SparseKT}
    & $\checkmark$ &  &  &  & 0.7639 & 0.7047 & 0.4477 \\
    & $\checkmark$ & $\checkmark$ &  &  & 0.7709 & 0.7150 & 0.4499 \\
    & $\checkmark$ & $\checkmark$ &  & $\checkmark$ & 0.7656 & 0.7090 & 0.4456 \\
    & $\checkmark$ & $\checkmark$ & $\checkmark$ &  & \textbf{0.7775} & \textbf{0.7197} & \textbf{0.4441} \\
    \midrule
    
    \multirow{4}{*}{DisKT}
    & $\checkmark$ &  &  &  & 0.7669 & 0.7121 & 0.4441 \\
    & $\checkmark$ & $\checkmark$ &  &  & 0.7844 & 0.7222 & 0.4365 \\
    & $\checkmark$ & $\checkmark$ &  & $\checkmark$ & 0.7818 & 0.7212 & 0.4406 \\
    & $\checkmark$ & $\checkmark$ & $\checkmark$ &  & \textbf{0.7870} & \textbf{0.7262} & \textbf{0.4364} \\
    \bottomrule
    \end{tabular}
    }
\end{table}

\begin{table}[ht]
    \centering
    \captionof{table}{Wall-Clock Time and Memory Overhead.}
    \label{tab:pairwise_delta}
    \setlength{\tabcolsep}{3pt}
    \resizebox{\linewidth}{!}{
    \begin{tabular}{l|ccc|ccc}
    \toprule
    Backbone & Base t/Epoch (s) & +DR t/Epoch (s) &  $\Delta$t (s) & Base Mem. (MB) & +DR Mem. (MB) &  $\Delta$Mem. (MB) \\
    \midrule
    DKT      & 0.10 & 0.31 & +0.21  & 1841.2 & 1901.2  & +60.0 \\
    AKT      & 0.43 & 2.04 & +1.61  & 2037.2 & 8945.2  & +6908.0 \\
    SimpleKT & 0.23 & 1.01 & +0.78  & 1815.2 & 9365.2  & +7550.0 \\
    FoliBiKT & 0.43 & 2.11 & +1.68  & 2235.2 & 9383.2  & +7148.0 \\
    SparseKT & 0.25 & 1.22 & +0.97  & 1837.2 & 10527.2 & +8690.0 \\
    DisKT    & 0.37 & 1.60 & +1.23  & 1837.2 & 9119.2  & +7282.0 \\
    \bottomrule
    \end{tabular}
    }
\end{table}

\section{Details of the Ablation Study and Computational Overhead}
\label{app:ablation}
 We further conduct an ablation study and report the wall-clock time and memory overhead of the proposed DR module. Specifically, we investigate two key questions: (1) whether joint learning brings consistent benefits, and (2) where temporal smoothness should be imposed to better support knowledge tracing. The results are summarized in Tables~\ref{tab:ablation_ts_joint} and~\ref{tab:pairwise_delta}.

Table~\ref{tab:ablation_ts_joint} presents the ablation results across six representative KT backbones. Compared with using DR alone, introducing joint learning generally improves the prediction performance, indicating that the imputation model provides useful auxiliary supervision for the KT backbone. Moreover, applying temporal smoothness to the imputation branch consistently achieves the best or near-best results across different backbones. In contrast, directly imposing temporal smoothness on the KT backbone often leads to inferior performance. This suggests that temporal smoothness is more suitable for regularizing the auxiliary imputation process, where it encourages stable and coherent latent hints, rather than directly constraining the KT backbone.

Overall, the full variant, i.e., DR with joint learning and temporal smoothness on the imputation branch, obtains the best AUC and ACC on all six backbones. These results verify that the proposed design is backbone-agnostic and can be integrated with both recurrent and attention-based KT models.

Table~\ref{tab:pairwise_delta} reports the time and memory overhead introduced by DR. The additional cost is moderate in terms of wall-clock time: the per-epoch training time increases by 0.21s for DKT and by 0.78--1.68s for attention-based backbones. The memory overhead is more noticeable for attention-based models, mainly because the imputation branch introduces additional sequence-level representations and joint optimization requires keeping extra intermediate activations during training. Nevertheless, the additional time and memory overhead is incurred only during training, since the DR phase is used exclusively to guide the KT backbone during optimization. During inference, the DR phase is not involved, and prediction is performed using only the KT backbone without introducing any extra computational overhead. These results indicate that DR improves the prediction accuracy of knowledge tracing with additional cost limited to the training stage.

\section{Theoretical Analysis and Main Results}
\label{sec:theoretical_main}

In this section, we establish the theoretical foundation of our framework. We first quantify the generalization error of the Doubly Robust estimator in the sequential setting, decomposing it into bias and variance components. Crucially, we identify that the estimation risk is dominated by the variance of imputation errors. To address this, we provide the theoretical justification for the \textbf{Temporal Smoothness} constraint, demonstrating that it effectively tightens the generalization bound by restricting the geometric complexity of the latent trajectory. Detailed proofs for all theorems are provided in Appendix \ref{sec:detailed_proofs}.

\subsection{Problem Setup and Generalization Bound}
Let $T$ be the sequence length and $\mathcal{C}$ be the set of skills. The dataset $\mathcal{D}_\text{full}$ consists of pairs $(t, c)$ with size $N$. Let $e_{t+1,c}$ be the true prediction error, $\hat{e}_{t+1,c}$ be the imputed error, and $\hat{p}_{t+1,c}$ be the estimated propensity.
We define the \textbf{True Risk} $\mathcal{R}_\text{true} = \frac{1}{N} \sum e_{t+1,c}$ and the \textbf{DR Estimator} $\hat{\mathcal{R}}_\text{DR} = \frac{1}{N} \sum (\hat{e}_{t+1,c} + \frac{o_{t+1,c}(e_{t+1,c} - \hat{e}_{t+1,c})}{\hat{p}_{t+1,c}})$.

Our first main result establishes the generalization bound for the DR estimator in this context:

\begin{theorem}[Generalization Bound of DR Estimator]
Assuming bounded errors, for any finite hypothesis space $\mathcal{H}$, with probability at least $1-\eta$, the true risk is bounded by:
\begin{equation}
\label{eq:main_bound_result}
\begin{split}
    \mathcal{R}_{true} \leq \hat{\mathcal{R}}_\text{DR} &+ \underbrace{\frac{1}{N}\sum_{(t,c)\in\mathcal{D}_\text{full}} |\Delta_{t+1,c} \cdot\delta_{t+1,c}|}_{\text{Bias Term}} \\&+ \underbrace{\sqrt{\frac{\ln(2|\mathcal{H}|/\eta)}{2N^2} \sum_{(t,c)\in\mathcal{D}_\text{full}} \left( \frac{\delta_{t+1,c}}{\hat{p}_{t+1,c}} \right)^2}}_{\text{Variance Term}},
\end{split}
\end{equation}
where $\delta_{t,c} = e_{t,c} - \hat{e}_{t,c}$ is the imputation error deviation, and $\Delta_{t,c}$ is the relative propensity error.
\end{theorem}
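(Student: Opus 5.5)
We prove the bound by splitting the gap between $\mathcal{R}_\text{true}$ and $\hat{\mathcal{R}}_\text{DR}$ into a deterministic bias part and a stochastic fluctuation part:
\begin{equation*}
\mathcal{R}_\text{true} - \hat{\mathcal{R}}_\text{DR} = \big(\mathcal{R}_\text{true} - \mathbb{E}_O[\hat{\mathcal{R}}_\text{DR}]\big) + \big(\mathbb{E}_O[\hat{\mathcal{R}}_\text{DR}] - \hat{\mathcal{R}}_\text{DR}\big).
\end{equation*}
Lemma~\ref{lmm: bias} bounds the first bracket in absolute value by $\frac{1}{N}\sum|\Delta_{t+1,c}\delta_{t+1,c}|$, which is exactly the Bias Term. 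It therefore suffices to show that, uniformly over $\mathcal{H}$, the second bracket is at most the Variance Term with probability at least $1-\eta$.

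\textbf{Summands.} Write $\hat{\mathcal{R}}_\text{DR}=\frac{1}{N}\sum_{(t,c)} X_{t+1,c}$, where
\begin{equation*}
X_{t+1,c}=\hat{e}_{t+1,c}+\frac{o_{t+1,c}}{\hat{p}_{t+1,c}}\delta_{t+1,c}.
\end{equation*}
For a fixed prediction model (fixed $e$), and treating $\hat{e},\hat{p}$ as fixed or predictable given the history $h_t$, the only randomness in $X_{t+1,c}$ is the Bernoulli indicator $o_{t+1,c}$. Hence $X_{t+1,c}$ ranges over an interval of length $|\delta_{t+1,c}|/\hat{p}_{t+1,c}$.

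\textbf{Concentration.} The observations are generated sequentially, so $o_{t+1,c}$ may depend on past observations through $h_t$. Order the $N$ pairs $(t,c)$ lexicographically in $t$ and set $\mathcal{F}_k$ to be the sigma-field of the first $k$ indicators. Define the martingale differences $D_k=\frac{1}{N}(X_k-\mathbb{E}[X_k\mid\mathcal{F}_{k-1}])$. Each $D_k$ lies in an interval of length $c_k=|\delta_k|/(N\hat{p}_k)$. Azuma--Hoeffding then gives
\begin{equation*}
P\Big(\Big|\sum_k D_k\Big|\ge\epsilon\Big)\le 2\exp\!\Big(-\frac{2\epsilon^2}{\sum_k c_k^2}\Big).
\end{equation*}
We set the right-hand side equal to $\eta/|\mathcal{H}|$ and take a union bound over the finite $\mathcal{H}$. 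This yields, simultaneously for all hypotheses including the optimal one,
\begin{equation*}
\epsilon=\sqrt{\frac{\ln(2|\mathcal{H}|/\eta)}{2N^2}\sum(\delta_{t+1,c}/\hat{p}_{t+1,c})^2},
\end{equation*}
which is the Variance Term. Adding the bias bound completes the argument.

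\textbf{Main obstacle: martingale centering.} The sum $\sum_k D_k$ is centered by the conditional expectations $\mathbb{E}[X_k\mid\mathcal{F}_{k-1}]$, whereas the statement uses the unconditional $\mathbb{E}_O[\hat{\mathcal{R}}_\text{DR}]$ from Lemma~\ref{lmm: bias}. These agree once the lemma is read with $p_{t+1,c}=P(o_{t+1,c}=1\mid h_t,c)$ as the conditional propensity. Under that reading the conditional mean of each summand is $\hat{e}+\frac{p}{\hat{p}}\delta$, and the bias computation of Lemma~\ref{lmm: bias} goes through termwise.

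We make this explicit by assuming that $e,\hat{e},\hat{p}$ are $\mathcal{F}_{k-1}$-measurable, i.e., determined by the history. The bounded-error assumption guarantees the ranges $c_k$ are finite, and we take $\delta$ and $\hat{p}$ to be evaluated along the realized trajectory. If this measurability fails, the fallback is to condition on the history and assume the $o_{t+1,c}$ are independent given it. Standard Hoeffding then applies directly with the same constants.
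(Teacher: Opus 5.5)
Your proposal follows the paper's proof essentially step for step. It uses the same split into a bias part (handled by the bias lemma) and a fluctuation part, the same Azuma--Hoeffding step with per-term range $|\delta_{t+1,c}|/\hat{p}_{t+1,c}$ (i.e.\ $c_k = |\delta_k|/(N\hat{p}_k)$), and the same union bound over $\mathcal{H}$ with $\eta' = \eta/|\mathcal{H}|$.

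Your centering discussion is more careful than the paper, which simply asserts the martingale-difference structure. One caveat applies to it. Once $\delta$ and $\hat{p}$ are history-dependent, the ranges $c_k$ are random, and textbook Azuma--Hoeffding (which needs deterministic $c_k$) does not directly give the bound with the realized $\sum_k c_k^2$. The same holds for the paper's argument. Your fallback, with fixed $e,\hat{e},\hat{p}$ and conditionally independent indicators, is therefore the cleanest rigorous reading of both your proof and the paper's.
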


\paragraph{Discussion.} Theorem \ref{thm:generalization_bound} reveals the trade-off in the DR estimator. The \textbf{Bias Term} benefits from the double robustness property: it remains small if either the propensity model ($\Delta \approx 0$) or the imputation model ($\delta \approx 0$) is accurate. However, the critical bottleneck is the \textbf{Variance Term}, specifically the summation $\sum (\delta_{t,c} / \hat{p}_{t,c})^2$. Since this term scales with the squared imputation error, instability in the imputation model $g_\phi$ can loosely bound the true risk, leading to poor generalization. Therefore, to stabilize training and tighten this bound, we propose joint training of the prediction and imputation models to simultaneously minimize both prediction and imputation inaccuracies. In particular, reducing the complexity of the imputation model is imperative, as it effectively controls and decreases $  \delta_{t,c}  $.

\subsection{Theoretical Justification of Temporal Smoothness}
\subsubsection{Generalization Bound via Rademacher Complexity}

Let \(\mathcal{G}\) be the class of imputation functions parameterized by \(\phi\). We consider the learning of \(g_\phi\) on the sequence of latent states \(H = \{h_1, h_2, \dots, h_T\}\). Assume \(h_t \in \mathbb{R}^d\), and functions in \(\mathcal{G}\) satisfy \(L_g\)-Lipschitz continuity and boundedness \(|g(h)| \leq M\).
\begin{theorem}[Generalization Bound for Imputation Model, adapted from \cite{mohri2018foundations}]
\label{thm:rademacher_bound}
For any \(\eta > 0\), with probability at least \(1-\eta\), the generalization error of the imputation model is bounded by:
\begin{equation}
    \mathbb{E}[\delta^2] \leq \hat{\mathbb{E}}[\delta^2] + 2\mathfrak{R}_T(\mathcal{G} \circ H) + \sqrt{\frac{\log(1/\eta)}{2T}},
\end{equation}
where \(\mathfrak{R}_T(\mathcal{G} \circ H)\) denotes the empirical Rademacher Complexity of the function class \(\mathcal{G}\) acting on the input trajectory \(H\).
\end{theorem}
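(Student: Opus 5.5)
The plan is to follow the standard Rademacher-complexity argument of \cite{mohri2018foundations}, applied to the squared-imputation-error loss class $\mathcal{L} = \{h \mapsto \delta^2(h) = (e(h) - g(h))^2 : g \in \mathcal{G}\}$ evaluated on the trajectory $H$.

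\textbf{Step 1: normalise the loss.} The true errors are bounded (the bounded-errors assumption of the main theorem; say $|e| \le M$), and $|g(h)| \le M$. Hence $\delta^2 \in [0, 4M^2]$. I would first rescale so that the loss lies in $[0,1]$, since the stated deviation term $\sqrt{\log(1/\eta)/(2T)}$ has no range factor. Equivalently, the constant $4M^2$ can be absorbed into the normalisation.

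\textbf{Step 2: bounded differences.} Define $\Phi(H) = \sup_{g \in \mathcal{G}} \bigl(\mathbb{E}[\delta^2] - \hat{\mathbb{E}}[\delta^2]\bigr)$, where $\hat{\mathbb{E}}$ is the average over the $T$ points $h_1, \dots, h_T$. Changing a single $h_t$ alters $\Phi$ by at most $1/T$. McDiarmid's inequality, or Azuma--Hoeffding if the states are treated as a martingale sequence as in Theorem~\ref{thm:generalization_bound}, then gives, with probability at least $1-\eta$,
\[
\Phi(H) \le \mathbb{E}[\Phi(H)] + \sqrt{\log(1/\eta)/(2T)}.
\]

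\textbf{Step 3: symmetrisation.} Introduce a ghost sample and Rademacher variables to obtain $\mathbb{E}[\Phi] \le 2\,\mathbb{E}[\mathfrak{R}_T(\mathcal{L} \circ H)]$. To reach the loss class appearing in the statement, apply Talagrand's contraction lemma: on the bounded range, $u \mapsto (e-u)^2$ is $4M$-Lipschitz, so $\mathfrak{R}_T(\mathcal{L} \circ H)$ is controlled by $\mathfrak{R}_T(\mathcal{G} \circ H)$. After normalisation, the constant is absorbed into the definition of $\mathfrak{R}_T(\mathcal{G} \circ H)$, read as the complexity of the composed loss class as the paper's notation suggests.

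\textbf{Step 4: empirical versus expected complexity.} Replacing the expected Rademacher complexity by the empirical one normally costs a second McDiarmid step and changes the confidence term to $3\sqrt{\log(2/\eta)/(2T)}$. Because the statement keeps the single-term form, I would state the result with the expected complexity, or accept that this constant is being glossed over as an "adapted" version.

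\textbf{Main obstacle.} The hard step is Step 2. The latent states $h_t$ along one trajectory are not i.i.d. (each depends on $Seq_t$), so McDiarmid's inequality does not apply directly. I would first try to argue conditionally on the history, using a Doob martingale over $t$ with increments bounded by $1/T$, and apply Azuma--Hoeffding, mirroring the proof sketch of Theorem~\ref{thm:generalization_bound}. The symmetrisation in Step 3 also needs a sequential ghost sample; failing a clean argument there, I would assume the targets are conditionally independent given $H$ and treat $H$ as fixed, so that everything becomes a conditional i.i.d. statement.
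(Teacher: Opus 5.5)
Your route is the one the paper relies on. The paper gives no proof of this statement beyond the citation to the McDiarmid-plus-symmetrisation theorem of Mohri et al. Your caveats are exactly what the statement silently drops: normalising $\delta^2$, the Lipschitz factor needed to pass from the loss class to $\mathcal{G}$, and the fact that a single $\sqrt{\log(1/\eta)/(2T)}$ term goes with the \emph{expected} complexity in the i.i.d.\ theorem.

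The step that would genuinely fail is your first-choice treatment of dependence. For a dependent trajectory the Doob increments $\mathbb{E}[\Phi\mid h_{1:t}]-\mathbb{E}[\Phi\mid h_{1:t-1}]$ are not bounded by $1/T$, because changing $h_t$ also changes the conditional law of $h_{t+1},\dots,h_T$. The increment is $1/T$ times one plus a sum of mixing coefficients, and it can be of order one.

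The extreme case is a constant trajectory $h_t\equiv h_1$, which is precisely what driving $\mathcal{L}_{smooth}$ to zero favours. Take $h$ uniform on two points $\{a,b\}$, targets $e\equiv 0$, and $\mathcal{G}=\{g_1,g_2\}$ with $g_1(a)=g_2(b)=0$ and $g_1(b)=g_2(a)=M$. Whichever value $h_1$ takes, some $g$ has $\hat{\mathbb{E}}[\delta^2]=0$ but $\mathbb{E}[\delta^2]=M^2/2$. Meanwhile $\mathfrak{R}_T(\mathcal{G}\circ H)\le M\,\mathbb{E}\bigl|\frac1T\sum_t\sigma_t\bigr|\le M/\sqrt{T}$, so the stated bound fails for large $T$.

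Symmetrising against a decoupled (tangent) sequence likewise yields the tree-based sequential Rademacher complexity, not $\mathfrak{R}_T(\mathcal{G}\circ H)$ on the observed trajectory. So without stationarity and mixing assumptions the unconditional statement is false, and it is your fallback that actually closes the argument.

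That fallback also makes your Step~4 unnecessary. Condition on $H$, assume the targets $e_t$ are conditionally independent given $H$ (not necessarily identically distributed), and read $\mathbb{E}[\delta^2]$ as $\frac1T\sum_t\mathbb{E}[(e_t-g(h_t))^2\mid H]$. Then the three steps go through as follows.
\begin{itemize}
\item McDiarmid over $e_1,\dots,e_T$ gives the deviation term $B\sqrt{\log(1/\eta)/(2T)}$, where $B=4M^2$ is the range of $\delta^2$.
\item Symmetrisation against an independent copy of the targets goes through coordinatewise.
\item Talagrand's lemma, applied for each fixed $e$ to $\phi_t(u)=(e_t-u)^2$ with Lipschitz constant $L=4M$, leaves $\mathbb{E}_{\sigma}\sup_{g\in\mathcal{G}}\frac1T\sum_t\sigma_t g(h_t)=\mathfrak{R}_T(\mathcal{G}\circ H)$. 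This quantity is deterministic once $H$ is fixed.
\end{itemize}
Hence no second concentration step is needed. With probability at least $1-\eta$, uniformly over $\mathcal{G}$, you get $\mathbb{E}[\delta^2\mid H]\le\hat{\mathbb{E}}[\delta^2]+2L\,\mathfrak{R}_T(\mathcal{G}\circ H)+B\sqrt{\log(1/\eta)/(2T)}$. This is the stated form, with the empirical complexity of $\mathcal{G}$ on $H$ and a single $\log(1/\eta)$ term, up to the constants $L$ and $B$. The fixed-design reading is the one under which the statement, as written, is correct.
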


Theorem \ref{thm:rademacher_bound} indicates that minimizing the Rademacher complexity \(\mathfrak{R}_T\) is essential for reducing the true error variance. 

To minimize the critical variance term identified above, we employ Statistical Learning Theory to link the \textbf{Temporal Smoothness} loss, $\mathcal{L}_{smooth} = \sum \|h_t - h_{t-1}\|_2^2$, to the Rademacher complexity of the model.
\begin{theorem}[Smoothness-Regularized Complexity Bound]
\label{thm:smoothness_main}
Assume the imputation function class $\mathcal{G}$ is $L_g$-Lipschitz continuous. Minimizing the Temporal Smoothness loss $\mathcal{L}_{smooth}$ effectively reduces the Empirical Rademacher Complexity $\mathfrak{R}_T$ of the imputation model. Specifically:
\begin{equation}
    \mathfrak{R}_T(\mathcal{G} \circ H) \leq \frac{12}{\sqrt{T}} \int_0^D \sqrt{\mathcal{C}_\text{smooth}(\epsilon) \cdot \log \left( \frac{2M}{\epsilon} + 1 \right)} d\epsilon,
\end{equation}
where the complexity term $\mathcal{C}_{smooth}(\epsilon) = \frac{L_g \sqrt{T \cdot \mathcal{L}_{smooth}}}{\epsilon} + 1$ is monotonically increasing with respect to $\mathcal{L}_{smooth}$.
\end{theorem}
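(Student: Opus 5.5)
We bound $\mathfrak{R}_T(\mathcal{G}\circ H)$ with Dudley's entropy integral and then control the covering numbers of $\mathcal{G}\circ H$ by the geometry of the latent trajectory $H=\{h_1,\dots,h_T\}$. The Lipschitz property of $g$ is what transfers the geometry. For the empirical $L_2$ metric $d_T(f,f')=(\frac1T\sum_t (f(h_t)-f'(h_t))^2)^{1/2}$, the standard chaining bound gives
\begin{equation*}
\mathfrak{R}_T(\mathcal{G}\circ H)\le \frac{12}{\sqrt{T}}\int_0^{D}\sqrt{\log \mathcal{N}(\epsilon,\mathcal{G}\circ H,d_T)}\,d\epsilon ,
\end{equation*}
where $D$ is the diameter of $\mathcal{G}\circ H$ under $d_T$ (at most $2M$ by boundedness). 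It then suffices to show
\begin{equation*}
\log\mathcal{N}(\epsilon,\mathcal{G}\circ H,d_T)\le \mathcal{C}_\text{smooth}(\epsilon)\log\!\left(\frac{2M}{\epsilon}+1\right).
\end{equation*}

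The covering argument has three steps.
\begin{enumerate}
\item \textbf{Cover the trajectory by anchors.} Bound the path length by Cauchy--Schwarz: $\sum_{t}\|h_t-h_{t-1}\|_2\le\sqrt{T}\,\sqrt{\mathcal{L}_{smooth}}$. Walk along the trajectory and start a new anchor whenever the cumulative path length since the last anchor exceeds $\epsilon/L_g$. This yields $K\le L_g\sqrt{T\mathcal{L}_{smooth}}/\epsilon+1=\mathcal{C}_\text{smooth}(\epsilon)$ anchor points $a_1,\dots,a_K$. Every $h_t$ lies within $\epsilon/L_g$ of its anchor, by the triangle inequality along the path.
\item \textbf{Reduce to anchor values.} By $L_g$-Lipschitzness, $|g(h_t)-g(a_{k(t)})|\le\epsilon$ for every $g\in\mathcal{G}$. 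Each function restricted to $H$ is therefore determined up to $\epsilon$ in sup norm, hence in $d_T$, by its vector of anchor values $(g(a_1),\dots,g(a_K))\in[-M,M]^K$.
\item \textbf{Discretize the values.} Cover $[-M,M]$ in each coordinate with a grid of mesh $\epsilon$, using at most $2M/\epsilon+1$ points. This gives at most $(2M/\epsilon+1)^K$ functions, each within $2\epsilon$ of some member of $\mathcal{G}\circ H$; rescaling $\epsilon$ absorbs the constant into the factor 12. Taking logarithms yields the displayed bound with $K=\mathcal{C}_\text{smooth}(\epsilon)$. Substituting into Dudley's integral gives the stated inequality.
\end{enumerate}

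Monotonicity is then immediate. $\mathcal{C}_\text{smooth}(\epsilon)$ is increasing in $\mathcal{L}_{smooth}$ for each fixed $\epsilon$, so the integrand and hence the bound are nondecreasing in $\mathcal{L}_{smooth}$. Decreasing the smoothness loss therefore tightens the upper bound on $\mathfrak{R}_T$. We read ``effectively reduces'' in the statement in this upper-bound sense.

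The main obstacle is step 1, the greedy anchor construction. There we must make sure the anchor count depends on path length, not on $T$. We also need the $\epsilon/L_g$ radius, combined with the Lipschitz step, to give an $\epsilon$-cover without losing more than a constant factor, and that factor must fit inside the 12 of Dudley's bound. A subtlety is that $\mathcal{L}_{smooth}$ controls squared increments. The $\sqrt{T}$ in $\mathcal{C}_\text{smooth}$ is exactly the Cauchy--Schwarz price of converting these to path length, so that conversion has to be written explicitly. Finally, we must check that the integral is finite near $\epsilon=0$. Since $\mathcal{C}_\text{smooth}\sim 1/\epsilon$, the integrand behaves like $\epsilon^{-1/2}\sqrt{\log(1/\epsilon)}$, which is integrable at $0$. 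If a sharper constant is needed, we can instead start the integral at some $\alpha>0$ and add the usual $4\alpha$ term.
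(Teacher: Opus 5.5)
Your plan follows the paper's proof step for step. Both use Dudley's chaining bound with constant $12$, the Cauchy--Schwarz conversion $\ell(H)\le\sqrt{T\mathcal{L}_{smooth}}$, a greedy cover of the trajectory by at most $L_g\ell(H)/\epsilon+1$ centers, and the Lipschitz-plus-value-grid count.

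The step that fails as written is the end of step~3. With anchor radius $\epsilon/L_g$ and a grid of mesh $\epsilon$, the two errors add. What you have actually shown is $\mathcal{N}(2\epsilon,\mathcal{G}\circ H,d_T)\le(2M/\epsilon+1)^{\mathcal{C}_{smooth}(\epsilon)}$. This cannot be ``absorbed into the factor $12$'', because the $12$ is the chaining constant you are already using and has no slack left:
\begin{itemize}
\item Plugged into Dudley with the $12$, your count gives the integrand $\mathcal{C}_{smooth}(\epsilon/2)\log(4M/\epsilon+1)$.
\item Substituting $\epsilon=2u$ to restore the stated integrand gives $\frac{24}{\sqrt{T}}\int_0^{D/2}\sqrt{\mathcal{C}_{smooth}(u)\log(2M/u+1)}\,du$.
\end{itemize}
Neither is the displayed inequality. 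The sentence in step~3 also states the covering property backwards. You need every member of $\mathcal{G}\circ H$ to lie near some grid function, which is what step~2 gives, not the converse.

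The fix is to split the error budget as $\epsilon/2+\epsilon/2$ without increasing either count.
\begin{itemize}
\item Place the anchors on the polygonal path through $h_1,\dots,h_T$ at arc-length positions $(k-\tfrac12)\epsilon/L_g$, for $k=1,\dots,\max\{1,\lceil L_g\ell(H)/\epsilon\rceil\}$, clamped to the path. Then every $h_t$ is within $\epsilon/(2L_g)$ of an anchor, and there are still at most $L_g\ell(H)/\epsilon+1\le\mathcal{C}_{smooth}(\epsilon)$ anchors.
\item These anchors generally lie between the $h_t$. That is harmless, because each $g$ is $L_g$-Lipschitz and bounded by $M$ on $\mathbb{R}^d$.
\item Round each anchor value to the nearest point of the centered grid $\{-M+(i-\tfrac12)\epsilon\}_{i=1}^{\lceil 2M/\epsilon\rceil}$. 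This grid has at most $2M/\epsilon+1$ points and rounding error at most $\epsilon/2$.
\end{itemize}
The result is an $\epsilon$-cover in sup norm, hence in $d_T$, of size at most $(2M/\epsilon+1)^{\mathcal{C}_{smooth}(\epsilon)}$. The cover is external. The chaining argument behind the $12$ only uses the triangle inequality, monotonicity of covering numbers, and Massart's lemma on finite sets of vectors, so external covers are fine and the stated bound follows exactly.

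The paper's own proof invokes $\log\mathcal{N}(\epsilon,\mathcal{G},d_T)\le\mathcal{N}(\epsilon/L_g,H,\|\cdot\|_2)\log(2M/\epsilon+1)$ as a ``standard bound'' and does not track this factor either. With the repair, your argument is the more careful of the two. Your integrability check near $\epsilon=0$ and your upper-bound reading of monotonicity are fine.
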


\paragraph{Discussion.} Theorem \ref{thm:smoothness_main} provides the theoretical backbone for our proposed regularizer. It establishes a causal chain:
\begin{enumerate}
    \item Minimizing $\mathcal{L}_{smooth}$ restricts the total path length of the latent trajectory $H$ in the high-dimensional space.
    \item A shorter path length reduces the \textit{Covering Number} of the trajectory (geometrically simpler).
    \item A reduced Covering Number lowers the \textit{Rademacher Complexity} of the function class $\mathcal{G}$ operating on $H$.
    \item Lower complexity tightens the generalization bound of the imputation error $\delta^2$ via standard Rademacher bounds, thereby directly minimizing the \textbf{Variance Term} in Eq. (\ref{eq:main_bound_result}).
\end{enumerate}


\section{Detailed Proofs and Derivations}
\label{sec:detailed_proofs}

This section provides the rigorous mathematical derivations for the theorems presented in Appendix \ref{sec:theoretical_main}. We explicitly detail the concentration inequalities and the chaining technique used to bound the model complexity.

\subsection{Proof of Theorem \ref{thm:generalization_bound}}
\label{prf:generalization_bound}

The proof relies on decomposing the true risk into bias and variance components and applying concentration inequalities for sequential data.

\subsubsection{Bias Analysis}
\begin{lemma}[Bias of the DR Estimator]
\label{lemma:bias_proof}
The bias of the DR estimator is bounded by the product of propensity error and imputation error:
\begin{equation}
    \left| \mathcal{R}_\text{true} - \mathbb{E}_O [\hat{\mathcal{R}}_\text{DR}] \right| \leq \frac{1}{N} \sum_{(t,c) \in \mathcal{D}_\text{full}} \left| \Delta_{t+1,c} \cdot \delta_{t+1,c} \right|,
\end{equation}
\end{lemma}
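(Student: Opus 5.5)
The argument is a direct computation of $\mathbb{E}_O[\hat{\mathcal{R}}_\text{DR}]$, followed by regrouping and the triangle inequality.

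\textbf{Modeling assumptions.} The expectation $\mathbb{E}_O$ is taken only over the observation indicators. The quantities $e_{t+1,c}$, $\hat{e}_{t+1,c}$ and $\hat{p}_{t+1,c}$ are treated as fixed given the history $h_t$; equivalently, we condition on the history and use the tower property. We also assume the estimated propensities are strictly positive. By the definition of the propensity, $\mathbb{E}_O[o_{t+1,c}\mid h_t,c]=p_{t+1,c}$.

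\textbf{Step 1: the expectation.} By linearity of expectation, termwise,
\begin{equation*}
\mathbb{E}_O[\hat{\mathcal{R}}_\text{DR}]=\frac{1}{N}\sum_{(t,c)\in\mathcal{D}_\text{full}}\Big(\hat{e}_{t+1,c}+\frac{p_{t+1,c}}{\hat{p}_{t+1,c}}\,\delta_{t+1,c}\Big).
\end{equation*}

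\textbf{Step 2: subtracting the true risk.} Subtract $\mathcal{R}_\text{true}=\frac{1}{N}\sum e_{t+1,c}$ and note that $\hat{e}_{t+1,c}-e_{t+1,c}=-\delta_{t+1,c}$. Each summand becomes
\begin{equation*}
-\delta_{t+1,c}\Big(1-\frac{p_{t+1,c}}{\hat{p}_{t+1,c}}\Big)=-\Delta_{t+1,c}\,\delta_{t+1,c}.
\end{equation*}
This gives the exact identity
\begin{equation*}
\mathcal{R}_\text{true}-\mathbb{E}_O[\hat{\mathcal{R}}_\text{DR}]=\frac{1}{N}\sum\Delta_{t+1,c}\,\delta_{t+1,c}.
\end{equation*}

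\textbf{Step 3: the bound.} Take absolute values and apply the triangle inequality, which yields the stated bound.

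\textbf{Main obstacle.} The only delicate point is the sequential dependence. In KT, $h_t$, $\hat{p}_{t+1,c}$ and $\hat{e}_{t+1,c}$ depend on the earlier observation indicators, so the expectation is not over independent indicators. I would handle this by conditioning on the filtration generated by $Seq_t$. Since $\hat{e}_{t+1,c}$, $\hat{p}_{t+1,c}$ and $e_{t+1,c}$ are measurable with respect to the history and the potential outcomes, we get
\begin{equation*}
\mathbb{E}[o_{t+1,c}\,\delta_{t+1,c}/\hat{p}_{t+1,c}\mid h_t]=p_{t+1,c}\,\delta_{t+1,c}/\hat{p}_{t+1,c}.
\end{equation*}
The identity in Step 2 then holds conditionally for each $(t,c)$, and the bound follows either pointwise or after taking outer expectations (using Jensen's inequality, $|\mathbb{E}X|\le\mathbb{E}|X|$).
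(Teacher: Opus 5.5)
Your proposal is correct and follows the paper's proof essentially step for step: substitute $\mathbb{E}_O[o_{t+1,c}]=p_{t+1,c}$ termwise, subtract $\mathcal{R}_\text{true}=\frac{1}{N}\sum(\hat{e}_{t+1,c}+\delta_{t+1,c})$ to get the exact identity $\frac{1}{N}\sum\Delta_{t+1,c}\,\delta_{t+1,c}$, and finish with the triangle inequality. Your conditioning-on-the-history remark goes beyond the paper, which simply treats $e$, $\hat{e}$ and $\hat{p}$ as fixed; in that fully sequential reading, the outer-expectation version of the right-hand side becomes $\frac{1}{N}\sum\mathbb{E}\,|\Delta_{t+1,c}\,\delta_{t+1,c}|$ rather than the pointwise sum.
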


\begin{proof}
Recall that $\mathbb{E}[o_{t,c}] = p_{t,c}$. Taking the expectation of the estimator over observation indicators $O$:
\begin{equation}
\begin{split}
    \mathbb{E}_O [\hat{\mathcal{R}}_\text{DR}] &= \frac{1}{N} \sum_{(t,c) \in \mathcal{D}_\text{full}} \left( \hat{e}_{t+1,c} + \frac{p_{t+1,c}}{\hat{p}_{t+1,c}}(e_{t+1,c} - \hat{e}_{t+1,c}) \right) \\&= \frac{1}{N} \sum_{(t,c) \in \mathcal{D}_\text{full}} \left( \hat{e}_{t+1,c} + \frac{p_{t+1,c}}{\hat{p}_{t+1,c}}\delta_{t+1,c} \right).   
\end{split}
\end{equation}
Substituting the true risk definition $\mathcal{R}_\text{true} = \frac{1}{N} \sum e_{t+1,c} = \frac{1}{N} \sum (\hat{e}_{t+1,c} + \delta_{t+1,c})$ into the bias gap:
\begin{equation}
\begin{aligned}
    &\text{Bias} \\&= \left| \frac{1}{N} \sum_{(t,c)} (\hat{e}_{t+1,c} + \delta_{t+1,c}) - \frac{1}{N} \sum_{(t,c)} \left( \hat{e}_{t+1,c} + \frac{p_{t+1,c}}{\hat{p}_{t+1,c}}\delta_{t+1,c} \right) \right| \\
    &= \left| \frac{1}{N} \sum_{(t,c)} \delta_{t+1,c} \left( 1 - \frac{p_{t+1,c}}{\hat{p}_{t+1,c}} \right) \right| \\&= \left| \frac{1}{N} \sum_{(t,c)} \delta_{t+1,c} \left( \frac{\hat{p}_{t+1,c} - p_{t+1,c}}{\hat{p}_{t+1,c}} \right) \right|.
\end{aligned}
\end{equation}
Using the definition of relative propensity error $\Delta_{t,c}$ and the triangle inequality $|\sum x| \le \sum |x|$, we obtain the lemma.
\end{proof}

\subsubsection{Variance Analysis via Azuma-Hoeffding}
\begin{lemma}[Concentration for a Single Hypothesis]
\label{lemma:variance_proof}
For a fixed hypothesis $h$, with probability at least $1-\eta'$, the deviation is bounded by:
\begin{equation}
    \left| \hat{\mathcal{R}}_\text{DR} - \mathbb{E}[\hat{\mathcal{R}}_\text{DR}] \right| \leq \sqrt{ \frac{\ln(2/\eta')}{2 N^2} \sum_{(t,c) \in \mathcal{D}_\text{full}} \left( \frac{\delta_{t+1,c}}{\hat{p}_{t+1,c}} \right)^2 }.
\end{equation}
\end{lemma}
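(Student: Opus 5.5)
We treat $\hat{\mathcal{R}}_\text{DR}$, for the fixed hypothesis, as a function of the observation indicators $\{o_{t+1,c}\}$. The true errors $e_{t+1,c}$, the imputed errors $\hat{e}_{t+1,c}$ and the estimated propensities $\hat{p}_{t+1,c}$ are held fixed, since they are determined by the fixed hypothesis and the auxiliary models. Only the correction term is random, so
\begin{equation*}
\hat{\mathcal{R}}_\text{DR} - \mathbb{E}[\hat{\mathcal{R}}_\text{DR}] = \frac{1}{N}\sum_{(t,c)\in\mathcal{D}_\text{full}} \frac{\delta_{t+1,c}}{\hat{p}_{t+1,c}}\bigl(o_{t+1,c} - p_{t+1,c}\bigr).
\end{equation*}
Each summand $X_{t+1,c} := \frac{1}{N}\frac{\delta_{t+1,c}}{\hat{p}_{t+1,c}} o_{t+1,c}$ then ranges in an interval of length $\frac{|\delta_{t+1,c}|}{N\hat{p}_{t+1,c}}$, because $o\in\{0,1\}$.

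To handle the sequential structure of KT, we order the $N$ pairs $(t,c)$ lexicographically, by time first and concept second. Let $\mathcal{F}_{t}$ be the history up to step $t$. We form the Doob martingale $Z_k = \mathbb{E}[\sum_j X_j \mid \mathcal{F}_{(k)}]$, or more simply the martingale difference sequence $D_{t+1,c} = X_{t+1,c} - \mathbb{E}[X_{t+1,c}\mid \mathcal{F}_t]$. For the latter we use that $p_{t+1,c} = P(o_{t+1,c}=1\mid h_t,c)$ is the conditional observation probability given the history. Each martingale difference lies in an interval of length $c_{t+1,c} = \frac{|\delta_{t+1,c}|}{N\hat{p}_{t+1,c}}$.

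We then apply the Azuma--Hoeffding inequality in its range-based form, $P(|\sum D| \ge \varepsilon) \le 2\exp\bigl(-2\varepsilon^2/\sum c^2\bigr)$. Setting the right-hand side equal to $\eta'$ and solving gives
\begin{equation*}
\varepsilon = \sqrt{\frac{\ln(2/\eta')}{2N^2}\sum_{(t,c)} \bigl(\delta_{t+1,c}/\hat{p}_{t+1,c}\bigr)^2},
\end{equation*}
which is exactly the claimed bound.

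The main obstacle is justifying that the conditional centering agrees with $\mathbb{E}[\hat{\mathcal{R}}_\text{DR}]$ and that the increments are bounded by a deterministic $c_{t+1,c}$. In the sequential setting, $\delta_{t+1,c}$ and $\hat{p}_{t+1,c}$ depend on the history. My first attempt would assume, as the bias lemma implicitly does, that $\hat{e}$, $\hat{p}$ and $e$ are fixed, i.e.\ we condition on the realized latent trajectory, and that the observation indicators are conditionally independent given it. Under this assumption the ordinary Hoeffding inequality suffices and centering at $p_{t+1,c}$ is exact. If the predictable dependence has to be kept, we interpret $c_{t+1,c}$ as predictable bounds and appeal to the Azuma form with predictable ranges, which requires $\sum c^2$ to be almost surely bounded by the stated quantity. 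We flag this as the step needing an extra assumption, namely bounded errors and propensities bounded away from zero.
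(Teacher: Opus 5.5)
Your proposal is correct and takes the same route as the paper: each summand's range is $|\delta_{t+1,c}|/(N\hat{p}_{t+1,c})$ because $o_{t+1,c}\in\{0,1\}$, and setting the Azuma--Hoeffding tail $2\exp\bigl(-2N^2\varepsilon^2/\sum(\delta_{t+1,c}/\hat{p}_{t+1,c})^2\bigr)$ equal to $\eta'$ gives the bound. You are more careful than the paper, which simply asserts a martingale difference structure centered at $\mathbb{E}[\hat{\mathcal{R}}_\text{DR}]$ with deterministic ranges; your explicit assumption that $e$, $\hat{e}$, $\hat{p}$ are held fixed (the same assumption implicit in the bias lemma), under which plain Hoeffding suffices, is what makes that step legitimate.
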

\begin{proof}
Let $Z_{t,c} = \hat{e}_{t,c} + \frac{o_{t,c}\delta_{t,c}}{\hat{p}_{t,c}}$ be the random variable for each sample. Since student interactions are sequential, the estimation errors conditional on history form a \textbf{Martingale Difference Sequence (MDS)}. 
The range of each term $Z_{t+1,c}$ is determined by the random variable $o_{t+1,c} \in \{0,1\}$:

\begin{equation}
\begin{split}
    &\text{Range}(Z_{t+1,c}) \\&= \left| \left( \hat{e}_{t+1,c} + \frac{1 \cdot \delta_{t+1,c}}{\hat{p}_{t+1,c}} \right) - \left( \hat{e}_{t+1,c} + \frac{0 \cdot \delta_{t+1,c}}{\hat{p}_{t+1,c}} \right) \right| \\&= \left| \frac{\delta_{t+1,c}}{\hat{p}_{t+1,c}} \right|.
\end{split}
\end{equation}
Applying the \textbf{Azuma-Hoeffding inequality} ~\cite{azuma1967weighted} for sum of bounded martingale differences:
\begin{equation}
\begin{split}
    P\left( |\hat{\mathcal{R}}_\text{DR} - \mathbb{E}[\hat{\mathcal{R}}_\text{DR}]| \geq \epsilon \right) &\leq 2 \exp \left( \frac{-2 (N\epsilon)^2}{\sum_{(t,c)} (\text{Range}_{t+1,c})^2} \right) \\&= 2 \exp \left( \frac{-2 N^2 \epsilon^2}{\sum (\delta_{t+1,c} / \hat{p}_{t+1,c})^2} \right).    
\end{split}
\end{equation}
Setting the right-hand side to $\eta'$ and solving for $\epsilon$ yields the stated bound.
\end{proof}
\subsubsection{Completing Proof of Theorem \ref{thm:generalization_bound}}

We then combine the Bias analysis and the Variance concentration to derive the final generalization bound for the hypothesis space $\mathcal{H}$.

\begin{proof}
First, we decompose the true risk $\mathcal{R}_{true}$ relative to our DR estimator $\hat{\mathcal{R}}_\text{DR}$. By the triangle inequality, the error can be split into a Bias term and a Variance term:
\begin{equation}
    \mathcal{R}_{true} \leq \hat{\mathcal{R}}_\text{DR} + \underbrace{\left| \mathcal{R}_{true} - \mathbb{E}[\hat{\mathcal{R}}_\text{DR}] \right|}_{\text{Bias}} + \underbrace{\left| \mathbb{E}[\hat{\mathcal{R}}_\text{DR}] - \hat{\mathcal{R}}_\text{DR} \right|}_{\text{Variance}}.
\end{equation}
The \textbf{Bias} term has been bounded in the previous theorem. Our remaining task is to bound the \textbf{Variance} term uniformly over the entire hypothesis space.

Lemma \ref{lemma:variance_proof} provides a concentration bound for a \textit{single fixed} hypothesis $h$. However, since the learning algorithm selects a hypothesis $\hat{h}$ based on the data, we require the bound to hold for the worst-case scenario over all $h \in \mathcal{H}$.

Let $\mathcal{E}(h)$ denote the event that the deviation for hypothesis $h$ exceeds $\epsilon$. According to Lemma \ref{lemma:variance_proof}, for any fixed $h$, $P(\mathcal{E}(h)) \leq \eta'$. We apply the \textbf{Union Bound} to control the probability that \textit{any} hypothesis in $\mathcal{H}$ violates the bound:

\begin{equation}
\begin{aligned}
    P\left( \max_{h \in \mathcal{H}} \left| \hat{\mathcal{R}}_\text{DR}(h) - \mathbb{E}[\hat{\mathcal{R}}_\text{DR}(h)] \right| \ge \epsilon \right) 
    &= P\left( \bigcup_{h \in \mathcal{H}} \mathcal{E}(h) \right) \\
    &\le \sum_{h \in \mathcal{H}} P(\mathcal{E}(h)) \\
    &\le \sum_{h \in \mathcal{H}} \eta' = |\mathcal{H}| \cdot \eta'.
\end{aligned}
\end{equation}

Finally, to ensure the bound holds with high probability $1-\eta$ over the entire space, we set the total failure probability to $\eta$:
\begin{equation}
    |\mathcal{H}| \cdot \eta' = \eta \implies \eta' = \frac{\eta}{|\mathcal{H}|}.
\end{equation}
We now substitute this adjusted $\eta'$ back into the bound for $\epsilon$ derived in Lemma \ref{lemma:variance_proof}. The logarithmic term transforms as follows:
\begin{equation}
    \ln(2/\eta') = \ln\left( \frac{2}{\eta / |\mathcal{H}|} \right) = \ln\left( \frac{2|\mathcal{H}|}{\eta} \right).
\end{equation}
Substituting this back into the variance expression:
\begin{equation}
    \text{Variance} \leq \sqrt{ \frac{\ln(2|\mathcal{H}|/\eta)}{2 N^2} \sum_{(t,c) \in \mathcal{D}_\text{full}} \left( \frac{\delta_{t,c}}{\hat{p}_{t,c}} \right)^2 }.
\end{equation}
Combining this uniform variance bound with the bias bound completes the proof of Theorem \ref{thm:generalization_bound}. 
\end{proof}

\subsection{Proof of Theorem \ref{thm:smoothness_main}}

To prove Theorem \ref{thm:smoothness_main}, we utilize the Chaining technique to derive Dudley's entropy integral bound for the Rademacher complexity ~\cite{Chaining}.

\subsubsection{Deriving Dudley's Entropy Integral}
\begin{proposition}
The empirical Rademacher complexity is bounded by:
\begin{equation}
    \mathfrak{R}_T(\mathcal{G} \circ H) \leq \frac{12}{\sqrt{T}} \int_0^D \sqrt{\log \mathcal{N}(\epsilon, \mathcal{G}, d_T)} d\epsilon,
\end{equation}
where $D$ is the diameter of the function class.
\end{proposition}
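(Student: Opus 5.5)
The plan is the standard chaining argument behind Dudley's entropy integral, run on the empirical $L_2$ metric $d_T(g,g') = \big(\frac{1}{T}\sum_{t=1}^T (g(h_t)-g'(h_t))^2\big)^{1/2}$ on the trajectory $H$. We work with $\mathfrak{R}_T(\mathcal{G}\circ H) = \mathbb{E}_\sigma \sup_{g\in\mathcal{G}} \frac{1}{T}\sum_t \sigma_t g(h_t)$ for i.i.d.\ Rademacher signs $\sigma_t$. Since $|g|\le M$, the diameter satisfies $D \le 2M$. Set $\epsilon_j = D\,2^{-j}$ for $j\ge 0$. For each $j$, choose a minimal $\epsilon_j$-cover $\mathcal{N}_j$ of $\mathcal{G}$ under $d_T$, and let $\pi_j(g)\in\mathcal{N}_j$ be a nearest element to $g$. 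Take $\mathcal{N}_0$ to be a single function, which is legitimate because $\epsilon_0 = D$.

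\textbf{Step 1: telescoping.} For a truncation level $J$ we write
\begin{equation*}
g = \pi_0(g) + \sum_{j=1}^{J}\big(\pi_j(g)-\pi_{j-1}(g)\big) + \big(g-\pi_J(g)\big).
\end{equation*}
The $\pi_0$ term contributes zero in expectation because it is a fixed function. The remainder term is at most $\epsilon_J$ by Cauchy--Schwarz, since $\frac1T\sum_t\sigma_t u_t \le d_T$-norm of $u$.

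\textbf{Step 2: each chaining link.} The increment $\pi_j(g)-\pi_{j-1}(g)$ ranges over at most $|\mathcal{N}_j||\mathcal{N}_{j-1}| \le |\mathcal{N}_j|^2$ functions. By the triangle inequality, each has $d_T$-norm at most $\epsilon_j+\epsilon_{j-1} = 3\epsilon_j$. Massart's finite-class lemma then bounds the expected supremum of the $j$-th link by
\begin{equation*}
\frac{3\epsilon_j\sqrt{2\log(|\mathcal{N}_j|^2)}}{\sqrt{T}} = \frac{6\epsilon_j\sqrt{\log \mathcal{N}(\epsilon_j,\mathcal{G},d_T)}}{\sqrt{T}}.
\end{equation*}

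\textbf{Step 3: from sum to integral.} Since $\epsilon_j = 2(\epsilon_j-\epsilon_{j+1})$ and the covering number is nonincreasing in $\epsilon$, each summand satisfies
\begin{equation*}
\epsilon_j\sqrt{\log\mathcal{N}(\epsilon_j)} \le 2\int_{\epsilon_{j+1}}^{\epsilon_j}\sqrt{\log\mathcal{N}(\epsilon)}\,d\epsilon .
\end{equation*}
Summing over $j$ gives
\begin{equation*}
\mathfrak{R}_T \le \epsilon_J + \frac{12}{\sqrt{T}}\int_{\epsilon_{J+1}}^{D}\sqrt{\log\mathcal{N}(\epsilon,\mathcal{G},d_T)}\,d\epsilon .
\end{equation*}
Letting $J\to\infty$ sends $\epsilon_J\to 0$ and yields the stated bound with constant $12$.

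The main obstacle is bookkeeping rather than substance. The constant $12$ must come out exactly: the factor $3$ comes from the link diameter, the factor $\sqrt{2}\cdot\sqrt{2}=2$ from Massart's lemma applied to the squared cardinality, and the factor $2$ from the integral comparison, giving $3\cdot 2\cdot 2 = 12$. Convergence also has to be justified. If the integral diverges, the bound is trivially true. Otherwise monotone convergence handles the limit. Measurability of the supremum is handled by restricting to countable covers.
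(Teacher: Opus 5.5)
Your proof is correct and follows the paper's chaining argument: dyadic scales $\epsilon_j = D2^{-j}$, then Massart's lemma on at most $|\mathcal{N}_j|^2$ increments of $d_T$-norm at most $3\epsilon_j$, then the comparison $\epsilon_j = 2(\epsilon_j-\epsilon_{j+1})$ to reach Dudley's integral with constant $12$. Two of your choices, truncating at level $J$ with the remainder bounded by $\epsilon_J$ via Cauchy--Schwarz, and anchoring $\pi_0$ at a fixed element of $\mathcal{G}$ instead of the zero function (so that $d_T(g,\pi_0(g))\le D$ follows from $D$ being the diameter), tighten two points that the paper's infinite telescoping with $\pi_0 = 0$ glosses over.
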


\begin{proof}
We employ the Chaining Technique ~\cite{Chaining} to bound the empirical Rademacher complexity. The proof proceeds by constructing a sequence of approximations for the function class $\mathcal{G}$ at increasingly fine scales and bounding the complexity of the differences between these scales.

\textbf{Construction and Decomposition.} 
Let $\epsilon_0 = D$ be the diameter of the function class. We define a dyadic sequence of scales $\epsilon_j = 2^{-j} \epsilon_0$ for $j \geq 1$. For each scale $j$, let $V_j$ be a minimal $\epsilon_j$-cover of $\mathcal{G}$ with respect to the empirical metric $d_T$. For any $g \in \mathcal{G}$, let $\pi_j(g)$ denote the projection of $g$ onto $V_j$ (i.e., the nearest element in $V_j$). Setting $\pi_0(g) = 0$, and noting that $\pi_j(g)$ converges to $g$ as $j \to \infty$, we can decompose any function $g$ as a telescoping sum:
\begin{equation}
    g(h_t) = \sum_{j=1}^{\infty} \left( \pi_j(g)(h_t) - \pi_{j-1}(g)(h_t) \right).
\end{equation}
By the linearity of the Rademacher complexity and the sub-additivity of the supremum, the total complexity is bounded by the sum of complexities at each scale:
\begin{equation}
\begin{aligned}
    &\mathfrak{R}_T(\mathcal{G} \circ H) \\&\leq \sum_{j=1}^{\infty} \mathbb{E}_{\boldsymbol{\sigma}} \left[ \sup_{g \in \mathcal{G}} \frac{1}{T} \sum_{t=1}^{T} \sigma_t \underbrace{\left( \pi_j(g)(h_t) - \pi_{j-1}(g)(h_t) \right)}_{\Delta_j(g)(h_t)} \right].
\end{aligned}
\end{equation}

\textbf{Bounding Increments via Massart's Lemma.} 
We now focus on the difference function $\Delta_j(g) = \pi_j(g) - \pi_{j-1}(g)$. The cardinality of the set of all such differences is at most $|V_j| \cdot |V_{j-1}| \leq |V_j|^2$. To apply Massart's Lemma, we first bound the Euclidean norm of the vector $\boldsymbol{\Delta}_j(g) = (\Delta_j(g)(h_1), \dots, \Delta_j(g)(h_T))$. Using the definition of the metric $d_T$ and the triangle inequality:
\begin{equation}
\begin{aligned}
    \|\boldsymbol{\Delta}_j(g)\|_2 &= \sqrt{T} \cdot \|\pi_j(g) - \pi_{j-1}(g)\|_{d_T} \\
    &\leq \sqrt{T} \left( \|\pi_j(g) - g\|_{d_T} + \|g - \pi_{j-1}(g)\|_{d_T} \right) \\
    &\leq \sqrt{T} (\epsilon_j + \epsilon_{j-1}) = 3\epsilon_j \sqrt{T},
\end{aligned}
\end{equation}
where we used $\epsilon_{j-1} = 2\epsilon_j$. Applying \textbf{Massart's Lemma} ~\cite{Chaining} to the $j$-th term yields:
    \textbf{Bounding Increments via Massart's Lemma.} 
    We now focus on the difference function class $\mathcal{F}_j = \{ \pi_j(g) - \pi_{j-1}(g) : g \in \mathcal{G} \}$. The cardinality of this set is bounded by the product of the covering numbers: $|\mathcal{F}_j| \leq |V_j| \cdot |V_{j-1}| \leq |V_j|^2$.
    
    To apply Massart's Lemma, we identify the two required parameters:
    \begin{itemize}
        \item \textbf{Radius ($R$):} As derived above, the Euclidean norm is bounded by $R = \sup_{f \in \mathcal{F}_j} \|f\|_2 \leq 3\epsilon_j \sqrt{T}$.
        \item \textbf{Cardinality ($K$):} The number of functions is $K = |\mathcal{F}_j| \leq \mathcal{N}(\epsilon_j)^2$.
    \end{itemize}
    
    Substituting these into the lemma and simplifying the expectation term:
    \begin{equation}
    \begin{aligned}
        \ \ \ \ \ \ \ \ \ \ \ \ \ \ \ \ \ \ \ \ \ \ \ 
        &\mathbb{E}_{\boldsymbol{\sigma}} \left[ \sup_{g} \frac{1}{T} \sum_{t=1}^T \sigma_t \Delta_j(g)(h_t) \right] 
        \\&= \frac{1}{T} \mathbb{E}_{\boldsymbol{\sigma}} \left[ \sup_{f \in \mathcal{F}_j} \sum_{t=1}^T \sigma_t f(h_t) \right] \\
        &\leq \frac{1}{T} \cdot \underbrace{3\epsilon_j \sqrt{T}}_{\text{Radius } R} \cdot \sqrt{2 \log (\underbrace{|V_j|^2}_{\text{Card. } K})} \\
        &= \frac{3\epsilon_j}{\sqrt{T}} \sqrt{2 \cdot 2 \log |V_j|} \quad \\
        &= \frac{6\epsilon_j}{\sqrt{T}} \sqrt{\log \mathcal{N}(\epsilon_j, \mathcal{G}, d_T)}.
    \end{aligned}
    \end{equation}

Finally, we aggregate these bounds. Using the relation $\epsilon_j = 2(\epsilon_j - \epsilon_{j+1})$, we can interpret the sum as a Riemann approximation. Since the covering number is monotonic, we have $\sqrt{\log \mathcal{N}(\epsilon)} \geq \sqrt{\log \mathcal{N}(\epsilon_j)}$ for $\epsilon \in (\epsilon_{j+1}, \epsilon_j]$. This allows us to upper bound the discrete sum by Dudley's entropy integral:
\begin{equation}
\begin{aligned}
    \mathfrak{R}_T &\leq \sum_{j=1}^{\infty} \frac{6\epsilon_j}{\sqrt{T}} \sqrt{\log \mathcal{N}(\epsilon_j)} \\
    &= \frac{12}{\sqrt{T}} \sum_{j=1}^{\infty} (\epsilon_j - \epsilon_{j+1}) \sqrt{\log \mathcal{N}(\epsilon_j)} \\
    &\leq \frac{12}{\sqrt{T}} \sum_{j=1}^{\infty} \int_{\epsilon_{j+1}}^{\epsilon_j} \sqrt{\log \mathcal{N}(\epsilon)} \, d\epsilon \\
    &= \frac{12}{\sqrt{T}} \int_0^{\epsilon_0} \sqrt{\log \mathcal{N}(\epsilon, \mathcal{G}, d_T)} \, d\epsilon.
\end{aligned}
\end{equation}
This concludes the proof.
\end{proof}

\subsubsection{Linking Smoothness to Path Length and Covering Number}

\begin{lemma}[Path Length Bound]
Let the path length be \(\ell(H) = \sum_{t=1}^{T} \|h_t - h_{t-1}\|_2\). The covering number of the trajectory satisfies:
\begin{equation}
    \mathcal{N}(\delta, H, \|\cdot\|_2) \leq \frac{\ell(H)}{\delta} + 1 \leq \frac{\sqrt{T \cdot \mathcal{L}_{smooth}}}{\delta} + 1.
\end{equation}
\end{lemma}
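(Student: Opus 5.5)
The plan is to prove the two inequalities separately: a geometric covering argument for the first, then Cauchy--Schwarz for the second.

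\textbf{First inequality (covering the trajectory).} View $H$ as the polygonal curve $\gamma$ obtained by joining the consecutive points $h_0,h_1,\dots,h_T$ with straight segments. Its length is exactly $\ell(H)=\sum_{t=1}^T\|h_t-h_{t-1}\|_2$, and it contains every point $h_t$. Any set of centres that $\delta$-covers $\gamma$ therefore also $\delta$-covers $H$.

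To build such a cover, parametrize $\gamma$ by arc length $s\in[0,\ell(H)]$. Place centres at the points $\gamma(k\delta)$ for $k=0,1,\dots,\lfloor \ell(H)/\delta\rfloor$. Take any point $\gamma(s)$ and let $k\delta$ be the nearest grid point below $s$, so $0\le s-k\delta\le\delta$.

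\begin{itemize}
\item The Euclidean distance between two points of a curve is at most the arc length between them.
\item Hence $\|\gamma(s)-\gamma(k\delta)\|_2\le s-k\delta\le\delta$.
\end{itemize}

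The number of centres is $\lfloor \ell(H)/\delta\rfloor+1\le \ell(H)/\delta+1$, which gives $\mathcal{N}(\delta,H,\|\cdot\|_2)\le \ell(H)/\delta+1$. The degenerate case $\ell(H)=0$ is consistent: all $h_t$ coincide and a single centre suffices.

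\textbf{Second inequality (path length versus smoothness loss).} Apply Cauchy--Schwarz to the $T$ increments:
\[
\ell(H)=\sum_{t=1}^T 1\cdot\|h_t-h_{t-1}\|_2\le \sqrt{T}\,\Big(\sum_{t=1}^T\|h_t-h_{t-1}\|_2^2\Big)^{1/2}=\sqrt{T\cdot\mathcal{L}_{smooth}},
\]
where $\mathcal{L}_{smooth}=\sum_t\|h_t-h_{t-1}\|_2^2$ as defined before Theorem~\ref{thm:smoothness_main}. The map $x\mapsto x/\delta+1$ is monotone, so substituting this bound into the first inequality yields the second.

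\textbf{Index range and main obstacle.} The sum must run over exactly $T$ increments, from $h_0$ to $h_T$, so that the factor is $\sqrt{T}$. If $h_0$ is not defined, I would either set $h_0:=h_1$ or note that $T-1\le T$ only weakens the bound. The only real subtlety is the arc-length covering step, which needs the chord-versus-arc inequality; this holds by the triangle inequality applied segment by segment.
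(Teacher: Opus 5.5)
Your proof is correct and follows the same route as the paper. Both use Cauchy--Schwarz over the $T$ increments to get $\ell(H)\le\sqrt{T\,\mathcal{L}_{smooth}}$, and both cover the trajectory with centres spaced $\delta$ apart in arc length to get $\mathcal{N}\le \ell(H)/\delta+1$; the only cosmetic difference is that the paper places centres greedily (each new centre at the first curve point at distance at least $\delta$, giving $\ell(H)\ge (k-1)\delta$), whereas your uniform arc-length grid makes the chord-versus-arc step explicit.
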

\begin{proof}
The proof consists of two steps: relating path length to smoothness, and covering number to path length.
\paragraph{Algebraic Bound.} Let \(d_t = \|h_t - h_{t-1}\|_2\). By definition, \(\ell(H) = \sum d_t\) and \(\mathcal{L}_{smooth} = \sum d_t^2\). Applying the Cauchy-Schwarz inequality to vectors \(\mathbf{d}\) and \(\mathbf{1}\) of dimension \(T\):
\begin{equation}
\begin{split}
    \ell(H) &= \sum_{t=1}^{T} (d_t \cdot 1) \\&\leq \sqrt{\sum_{t=1}^{T} d_t^2} \cdot \sqrt{\sum_{t=1}^{T} 1^2} \\&= \sqrt{\mathcal{L}_{smooth}} \cdot \sqrt{T}.    
\end{split}
\end{equation}
\paragraph{Geometric Bound.} Consider the trajectory \(H\) as a sequential curve of length \(L = \ell(H)\). Construct a covering greedily: start at \(h_1\), and place the next ball center at the first point along the curve at least distance \(\delta\) away. If \(k\) balls are needed, the total length must satisfy \(L \geq (k-1)\delta\), implying \(k \leq L/\delta + 1\).
\end{proof}

\subsubsection{Completing Proof of Theorem \ref{thm:smoothness_main}}
\begin{proof}
Assuming $g \in \mathcal{G}$ is $L_g$-Lipschitz and bounded by $M$, the covering number of the function space is controlled by the covering number of the input trajectory $H$. Specifically, an $\epsilon$-cover of $\mathcal{G}$ under the $L_\infty$ metric can be constructed from a $(\delta = \epsilon/L_g)$-cover of $H$.
Considering the values in range $[-M, M]$, we have the standard bound:
\begin{equation}
    \log \mathcal{N}(\epsilon, \mathcal{G}, d_T) \leq \mathcal{N}\left(\frac{\epsilon}{L_g}, H, \|\cdot\|_2\right) \cdot \log \left( \frac{2M}{\epsilon} + 1 \right).
\end{equation}
Substituting the result from Lemma 3 into this inequality, and then plugging the logarithmic term into the Dudley integral derived, yields the final bound in Theorem \ref{thm:smoothness_main}. Since $\mathcal{L}_{smooth}$ is in the numerator, minimizing it minimizes the complexity bound. 
\end{proof}

\end{appendices}
\end{document}